 \definecolor{mydarkblue}{rgb}{0,0.08,0.45}
 \definecolor{LightGray}{gray}{0.9}
\newtheorem{theorem}{Theorem}[section]
\newcommand{\ymark}{\ding{51}}%
\newcommand{\xmark}{\ding{55}}%
\title{Episodic Multi-Task Learning with \\
Heterogeneous Neural Processes}
\author{Jiayi Shen$^1$, Xiantong Zhen$^{1,2}$ ~\thanks{Currently with United Imaging Healthcare, Co., Ltd., China.}~, Qi (Cheems) Wang$^{3}$, Marcel Worring$^1$\\
$^1$University of Amsterdam, Netherlands, \texttt{\{j.shen, m.worring\}@uva.nl}\\
$^2$ Inception Institute of Artificial Intelligence, Abu Dhabi, UAE, \texttt{zhenxt@gmail.com} \\
$^3$ Kaiyuan Mathematical Sciences Institute, Changsha, China, \texttt{hhq123go@gmail.com} 
}
\begin{document}

\maketitle

\begin{abstract}

This paper focuses on the data-insufficiency problem in multi-task learning within an episodic training setup. Specifically, we explore the potential of heterogeneous information across tasks and meta-knowledge among episodes to effectively tackle each task with limited data. Existing meta-learning methods often fail to take advantage of crucial heterogeneous information in a single episode, while multi-task learning models neglect reusing experience from earlier episodes. To address the problem of insufficient data, we develop Heterogeneous Neural Processes (HNPs) for the episodic multi-task setup. Within the framework of hierarchical Bayes, HNPs effectively capitalize on prior experiences as meta-knowledge and capture task-relatedness among heterogeneous tasks, mitigating data-insufficiency. Meanwhile, transformer-structured inference modules are designed to enable efficient inferences toward meta-knowledge and task-relatedness. In this way, HNPs can learn more powerful functional priors for adapting to novel heterogeneous tasks in each meta-test episode. Experimental results show the superior performance of the proposed HNPs over typical baselines, and ablation studies verify the effectiveness of the designed inference modules.
\end{abstract}
\section{Introduction}
\label{sec: introduction}

Deep learning models have made remarkable progress with the help of the exponential increase in the amount of available training data~\cite{Goodfellow-et-al-2016}. 
However, many practical scenarios only have access to limited labeled data~\cite{xu2021weak}. Such data-insufficiency sharply degrades the model's performance~\cite{xu2021weak, johnson2019survey}.
Both meta-learning and multi-task learning have the potential to alleviate the data-insufficiency issue.  
Meta-learning can extract meta-knowledge from past episodes and thus enables rapid adaptation to new episodes with a few examples only~\cite{vinyals2016matching, finn2017model, snell2017prototypical, requeima2019fast}. 
Meanwhile, multi-task learning exploits the correlation among several tasks and results in more accurate learners for all tasks simultaneously~\cite{caruana1997multitask, zhang2021survey, long2017learning, shen2021variational}.
However, the integration of meta-learning and multi-task learning in overcoming the data-insufficiency problem is rarely investigated.

In episodic training~\cite{vinyals2016matching}, existing meta-learning methods ~\cite{vinyals2016matching, finn2017model, snell2017prototypical, requeima2019fast, garnelo2018neural, garnelo2018conditional} in every meta-training or meta-test episode learn a single-task. In this paper, we refer to this conventional setting as \textit{episodic single-task learning}. This setting restricts the potential for these models to explore task-relatedness within each episode, leaving the learning of multiple heterogeneous tasks in a single episode under-explored. We consider multiple tasks in each episode as \textit{episodic multi-task learning}. The crux of episodic multi-task learning is to generalize the ability of exploring task-relatedness from meta-training to meta-test episodes.
The differences between episodic single-task learning and episodic multi-task learning are illustrated in Figure~\ref{fig: MMTL_setting}. 
To be specific, we restrict the scope of the problem setup to the case where tasks in each meta-training or meta-test episode are heterogeneous but also relate to each other by sharing the same target space.

\begin{figure*}[t]
\begin{center}
\includegraphics[width=1.0\textwidth]{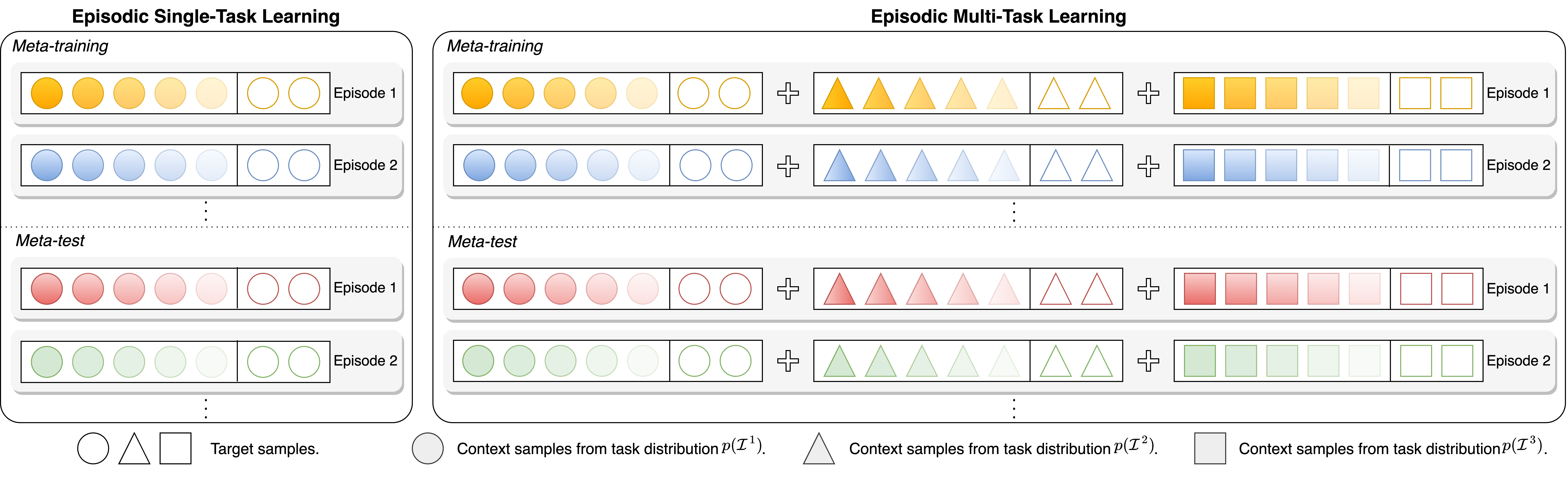}
\end{center}
\vspace{-2mm}
\caption{\textbf{Illustration of episodic multi-task learning.} 
Each row corresponds to a meta-training or meta-test episode.  
Different colors represent different label spaces among episodes; the same color with different shades represents different categories in the same task.
Compared with episodic single-task learning, episodic multi-task learning simultaneously handles several related tasks in a single episode.
}
\label{fig: MMTL_setting}
\vspace{-2mm}
\end{figure*}

The neural process (NP) family~\cite{garnelo2018neural, garnelo2018conditional}, as typical meta-learning probabilistic models~\cite{bruinsma2023autoregressive}, efficiently quantifies predictive uncertainty with limited data, making it in principle well-suited for tackling the problem of data-insufficiency. 
However, in practice, it is challenging for vanilla NPs~\cite{garnelo2018neural} with a global latent variable 
to encode beneficial heterogeneous information in each episode.
This issue is also known as the expressiveness bottleneck~\cite{kim2019attentive, wang2022learning}, which weakens the model's capacity to handle insufficient data, especially when faced with diverse heterogeneous tasks.

To better resolve the data-insufficiency problem, we develop Heterogeneous Neural Processes (HNPs) for episodic multi-task learning. As a new member of the NP family, HNPs improve the expressiveness of vanilla NPs by introducing a hierarchical functional space with global and local latent variables.
The remainder of this work is structured as follows: We introduce our method in Section (\ref{sec: method}).
Related work is overviewed in Section (\ref{sec: related work}). We report experimental results with analysis in Section (\ref{sec: experiment}), after which we conclude with a technical discussion, existing limitations, and future extensions. In detail, our technical contributions are two-fold:

\begin{itemize}
    \item Built on the hierarchical Bayes framework, our developed HNPs can simultaneously generalize meta-knowledge from past episodes to new episodes and exploit task-relatedness across heterogeneous tasks in every single episode. This mechanism makes HNPs more powerful when encoding complex conditions into functional priors.
    \item We design transformer-structured inference modules to infer the hierarchical latent variables, capture task-relatedness, and learn a set of tokens as meta-knowledge. The designed modules can fuse the meta-knowledge and heterogeneous information from context samples in a unified manner, boosting the generalization capability of HNPs across tasks and episodes. 
\end{itemize}

Experimental results show that the proposed HNPs together with transformer-structured inference modules, can exhibit superior performance on regression and classification tasks under the episodic multi-task setup.

\section{Methodology}
\label{sec: method}

\noindent\textbf{Notations~\footnote{For ease of presentation, we abbreviate a set $\{ {(\cdot)}^m \}^M_{m=1}$ as ${(\cdot)}^{1:M}$, where $M$ is a positive integer. Likewise, $\{ {(\cdot)}_o \}^O_{o=1}$ is abbreviated as ${(\cdot)}_{1:O}$. For convenience, the notation table is provided in Appendix B.}.} 
We will now formally define episodic multi-task learning. For a single episode $\tau$, we consider $M$ heterogeneous but related tasks $\mathcal{I}^{1:M}_{\tau} = \{\mathcal{I}^{m}_{\tau}\}_{m=1}^{M}$. Notably, the subscript denotes an episode, while superscripts are used to distinguish tasks in this episode. In the episodic multi-task setup, tasks in a single episode are heterogeneous since they are sampled from different task distributions $\{p(\mathcal{I}^{m})\}_{m=1}^{M}$, but are related at the same time as they share the target space $\mathcal{Y}_{\tau}$.

To clearly relate to the modeling of vanilla neural processes~\cite{garnelo2018neural}, this paper follows its nomenclature to define each task. Note that in vanilla neural processes~\textit{context} and \textit{target} are often respectively called \textit{support} and \textit{query} in conventional meta-learning~\cite{vinyals2016matching, finn2017model}.
Each task $\mathcal{I}^m_{\tau}$ contains a context set with limited training data $\mathcal{C}_{\tau}^{m} = \{{\bar{x}}_{\tau, i}^m, {\bar{y}}_{\tau, i}^m\}_{i=1}^{N_\mathcal{C}}$ and a target set $\mathcal{T}_{\tau}^{m} = \{{{x}}_{\tau, j}^m, {{y}}_{\tau, j}^m\}_{j=1}^{N_\mathcal{T}}$, where $N_\mathcal{C}$ and $N_\mathcal{T}$ are the numbers of context samples and target samples, respectively. 
$\bar{{x}}_{\tau, i}^m$ and ${x}_{\tau, j}^m$ represent features of context and target samples; 
while $\bar{{y}}_{\tau, i}^m, {y}_{\tau, j}^m \in \mathcal{Y}_{\tau}$ are their corresponding targets, where~$i=1, 2, ..., N_\mathcal{C}; j=1, 2, .., N_\mathcal{T}; m=1, 2, ..., M$. 
For simplicity, we denote the set of target samples and their corresponding ground-truths by
$\mathbf{x}_{\tau}^m = \{{x}_{\tau, j}^m\}_{j=1}^{N_\mathcal{T}}$, $\mathbf{y}_{\tau}^m = \{{y}_{\tau, j}^m\}_{j=1}^{N_\mathcal{T}}$.
For an episode $\tau$, episodic multi-task learning aims to perform simultaneously well on each corresponding target set $\mathcal{T}_{\tau}^{m}, m=1, 2.., M$, given the collection of context sets $\mathcal{C}_{\tau}^{1:M}$.

For classification, this paper follows the protocol of meta models~\cite{vinyals2016matching, finn2017model, ravi2017optimization}, such as $\texttt{$O$-way $K$-shot}$ setup, clearly suffering from the data-insufficiency problem. Thus, episodic multi-task classification can be cast as a \texttt{$M$-task $O$-way $K$-shot} supervised learning problem. An episode has $M$ related classification tasks, and each of them has a context set with $K$ different instances from each of the $O$ classes~\cite{finn2017model}. It is worth mentioning that the target spaces of meta-training episodes do not overlap with any categories in those of meta-test episodes. 

\subsection{Modeling and Inference of Heterogeneous Neural Processes}
\label{sec: method/modeling}

We now present the proposed heterogeneous neural process. The proposed model inherits the advantages of multi-task learning and meta-learning, which can exploit task-relatedness among heterogeneous tasks and extract meta-knowledge from previous episodes. Next, we characterize the generative process, clarify the modeling within the hierarchical Bayes framework, and derive the approximate evidence lower bound (ELBO) in optimization.

\paragraph{Generative Processes.} 
To get to our proposed method HNPs, we extend the distribution over a single function $p(f_\tau)$ as used in vanilla NPs to a joint distribution of multiple functions $p(f^{1:M}_\tau)$ for all heterogeneous tasks in a single episode $\tau$. In detail, the underlying multi-task function distribution $p(f^{1:M}_\tau)$ is inferred from a collection of context sets $\mathcal{C}^{1:M}_{\tau}$ and learnable meta-knowledge $\omega, \nu^{1:M}$. 
Note that $\omega$ represents the shared meta-knowledge for all tasks, and $\nu^{m}$ denotes the task-specific meta-knowledge corresponding to the task distribution $p(\mathcal{I}^m)$.
Hence, we can formulate the predictive distribution for every single episode as follows:
\begin{small}
\begin{equation}
\begin{aligned}
& p(\mathcal{T}_{\tau}^{1:M} | \mathcal{C}_{\tau}^{1:M};  \omega, \nu^{1:M}) = \int p(\mathbf{y}_{\tau}^{1:M}|\mathbf{x}_{\tau}^{1:M}, f_{\tau}^{1:M}) p(f_{\tau}^{1:M}|\mathcal{C}_{\tau}^{1:M}; \omega, \nu^{1:M}) df_{\tau}^{1:M},
\end{aligned}
\label{eq: lossfunction1}
\end{equation}
\end{small}where $p(f_{\tau}^{1:M}|\mathcal{C}_{\tau}^{1:M}; \omega, \nu^m)$ denotes the data-dependent functional prior for multiple tasks of the episode $\tau$.
The functional prior encodes context sets from all heterogeneous tasks and quantifies uncertainty in the functional space. Nevertheless, it is less optimal to characterize multi-task function generative processes with vanilla NPs, since the single latent variable limits the capacity of the latent space to specify the complicated functional priors. This expressiveness bottleneck in vanilla NPs is particularly severe for our episodic multi-task learning since each episode has diverse heterogeneous tasks with insufficient data.

\begin{wrapfigure}{t}{0.50\textwidth}
\centering
\vspace{-7mm}
\includegraphics[width=0.9\linewidth]{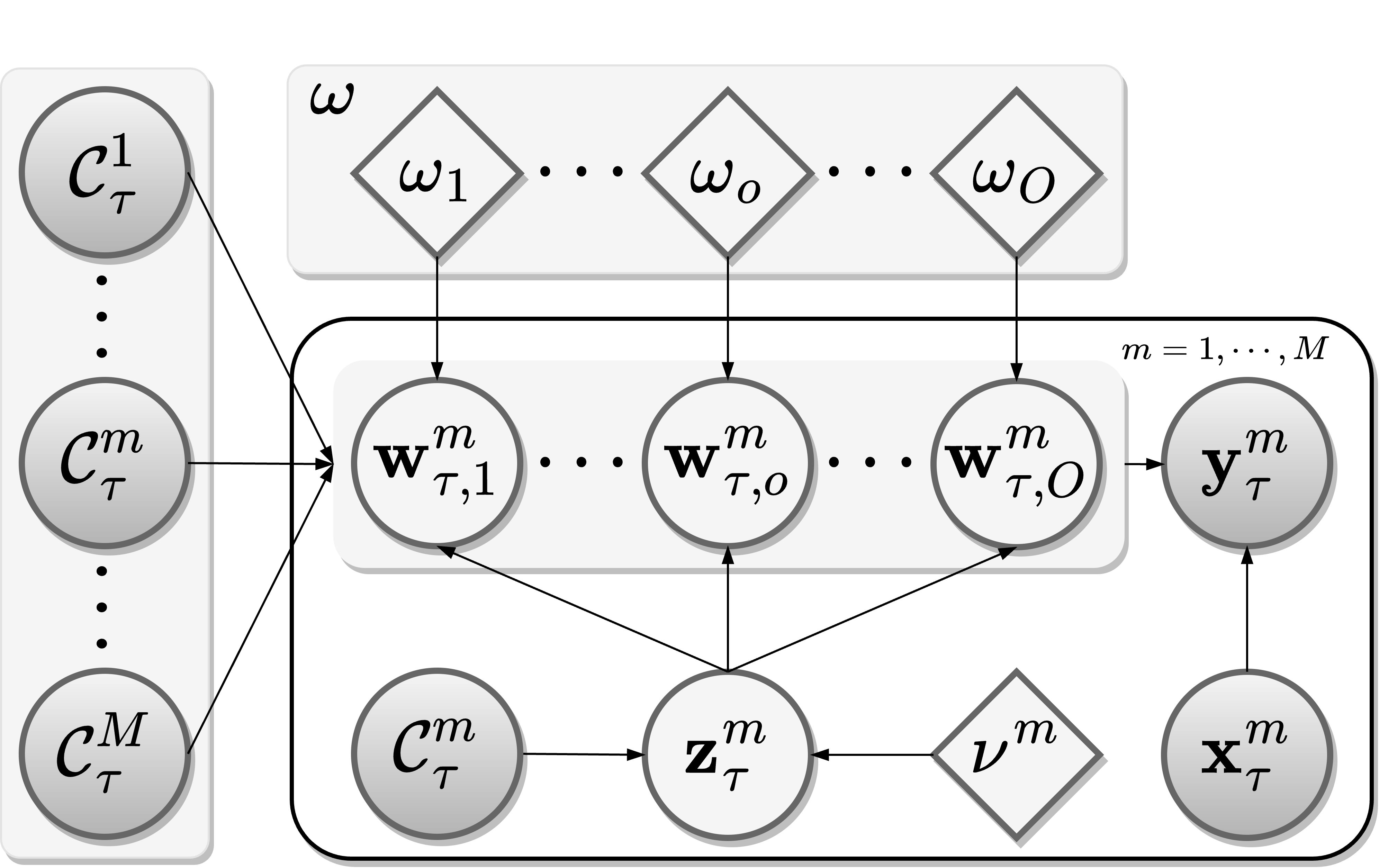}
\vspace{-2mm}
\caption{\textbf{Graphical model of the proposed HNPs in a single episode.} Filled shapes indicate observations. Probabilistic and deterministic variables are indicated by unfilled circles and diamonds, respectively.}
\label{fig: HNP_model}
\vspace{-5mm}
\end{wrapfigure}

\paragraph{Modeling within the Hierarchical Bayes Framework.}
To mitigate the expressiveness bottleneck of vanilla NPs, we model HNPs by parameterizing each task-specific function within a hierarchical Bayes framework. As illustrated in Figure~\ref{fig: HNP_model}, HNPs integrate a global latent representation $\mathbf{z}_{\tau}^m$ and a set of local latent parameters $\mathbf{w}_{\tau, 1:O}^m$ to model each task-specific function $f_{\tau}^{m}$.
Specifically, the latent variables are introduced at different levels: $\mathbf{z}_\tau^m$ encodes task-specific context information from $\mathcal{C}_{\tau}^{m}$ and $\nu^m$ in the representation level. $\mathbf{w}_{\tau, 1:O}^m$ encode prediction-aware information for a task-specific decoder from $\mathcal{C}_{\tau}^{1:M}$ and $\omega$ in the parameter level, where $O$ is the dimension of the decoder. For example, the dimension is the size of the target space when performing classification tasks. 

Notably, each local latent parameter is conditioned on the global latent representation, which controls access to all context sets in the episode for the corresponding task. Our method differs from previous hierarchical architectures~\cite{wang2022learning, wang2020doubly, kim2021multi, guo2023versatile} in the NP family since the local latent parameters of our HNPs are prediction-aware and explicitly constitute a decoder for the subsequent inference processes.

In practice, we assume that distributions of each task-specific function are conditionally independent. Thus, with the introduced hierarchical latent variables for each task in the episode, we can factorize the prior distribution over multiple functions in Eq. (\ref{eq: lossfunction1}) into:
\begin{equation}
\begin{aligned}
& p(f_{\tau}^{1:M}|\mathcal{C}_{\tau}^{1:M}; \omega, \nu^{1:M}) = \prod_{m=1}^M  p(\mathbf{z}_{\tau}^m|\mathcal{C}_{\tau}^{m}; \nu^m) p( \mathbf{w}_{\tau, 1:O}^m | \mathbf{z}_{\tau}^m, \mathcal{C}_{\tau}^{1:M}; \omega),
\end{aligned}
\label{eq: parameterization}
\end{equation}
where $p(\mathbf{z}_{\tau}^m|\mathcal{C}_{\tau}^{m};\nu^m)$ and $p( \mathbf{w}_{\tau, 1:O}^m | \mathbf{z}_{\tau}^m, \mathcal{C}_{\tau}^{1:M}; \omega)$ are prior distributions of the global latent representation and the local latent parameters to induce the task-specific function distribution. 

By integrating Eq. (\ref{eq: parameterization}) into Eq. (\ref{eq: lossfunction1}), we rewrite the modeling of HNPs in the following form:
\begin{small}
\begin{equation}
\begin{aligned}
p(\mathcal{T}_{\tau}^{1:M} | \mathcal{C}_{\tau}^{1:M};  \omega, \nu^{1:M})  =  & \prod_{m=1}^{M} \int  \Big \{ \int p(\mathbf{y}_{\tau}^{m}|\mathbf{x}_{\tau}^{m}, \mathbf{w}_{\tau, 1:O}^m) \\
& p( \mathbf{w}_{\tau, 1:O}^m | \mathbf{z}_{\tau}^m, \mathcal{C}_{\tau}^{1:M}; \omega) d \mathbf{w}^m_{\tau, 1:O}   \Big \} p(\mathbf{z}_{\tau}^m|\mathcal{C}_{\tau}^{m}; \nu^m) d\textbf{z}_{\tau}^m, 
\end{aligned}
\label{eq: HNP_model}
\end{equation}
\end{small}where $p(\mathbf{y}_{\tau}^{m}|\mathbf{x}_{\tau}^{m}, \mathbf{w}_{\tau, 1:O}^m)$ is the function distribution for the task $\mathcal{I}^m_{\tau}$ in HNPs. 
This distribution is obtained by the matrix multiplication of $\mathbf{x}_{\tau}^{m}$ and all local latent parameters $\mathbf{w}_{\tau, 1:O}^m$. 

Compared with most NP models ~\cite{garnelo2018neural, wang2022learning, wang2020doubly, kim2021multi} employing only latent representations, HNPs infer both latent representations and parameters in the hierarchical architecture from multiple heterogeneous context sets and learnable meta-knowledge.
Our model specifies a richer and more intricate functional space by leveraging the hierarchical uncertainty inherent in the context sets and meta-knowledge. This theoretically yields more powerful functional priors to induce multi-task function distributions.


Moreover, we claim that the developed model constitutes an \emph{exchangeable stochastic process} and demonstrate this via Kolmogorov Extension Theorem~\cite{klenke2013probability}. Please refer to Appendix B for the proof.

\paragraph{Approximate ELBO.}
Since both exact functional posteriors and priors are intractable, we apply variational inference to the proposed HNPs in Eq. (\ref{eq: HNP_model}). 
This results in the approximate ELBO:
\begin{small}
\begin{equation}
\begin{aligned}
& L_{\rm{HNPs}}(\omega, \nu^{1:M}, \theta, \phi) = \sum_{m=1}^{M} \bigg \{ \mathbb{E}_{q_{\theta}(\mathbf{z}_{\tau}^m|\mathcal{T}_{\tau}^m;\nu^m)}  \Big \{ \mathbb{E}_{q_{\phi}(\mathbf{w}^m_{\tau, 1:O} | \mathbf{z}_{\tau}^m, \mathcal{T}_{\tau}^{1:M}; \omega)}[\log p(\mathbf{y}_{\tau}^{m}|\mathbf{x}_{\tau}^{m}, \mathbf{w}^{m}_{\tau, 1:O})] \\
& - \mathbb{D}_{\rm{KL}}[q_{\phi}(\mathbf{w}^m_{\tau, 1:O} | \mathbf{z}_{\tau}^m, \mathcal{T}_{\tau}^{1:M}; \omega) || p_{\phi}(\mathbf{w}^m_{\tau, 1:O} | \mathbf{z}_{\tau}^m, \mathcal{C}_{\tau}^{1:M}; \omega)] \Big \} - \mathbb{D}_{\rm{KL}}[q_{\theta}(\mathbf{z}_{\tau}^m|\mathcal{T}_{\tau}^{m}; \nu^m)||p_{\theta}(\mathbf{z}_{\tau}^m|\mathcal{C}_{\tau}^{m}; \nu^m)]  \bigg \},
\end{aligned}
\label{eq: ELBO}
\end{equation}
\end{small}where $q_{\theta}(\mathbf{z}_{\tau}^m|\mathcal{T}_{\tau}^{m}; \nu^m)$ and $q_{\phi}(\mathbf{w}^m_{\tau, 1:O} | \mathbf{z}_{\tau}^m, \mathcal{T}_{\tau}^{1:M}; \omega)$ are variational posteriors of their corresponding latent variables. $\theta$ and $\phi$ are parameters of inference modules for $\mathbf{z}^m_{\tau}$ and $\mathbf{w}^m_{\tau, 1:O}$, respectively.
Following the protocol of vanilla NPs~\cite{garnelo2018neural}, the priors use the same inference modules as variational posteriors for tractable optimization. 
In this way, the KL-divergence terms in Eq. (\ref{eq: ELBO}) encourage all latent variables inferred from the context sets to stay close to those inferred from the target sets, enabling effective function generation with few examples.
Details on the derivation of the approximate ELBO and its tractable optimization are attached in Appendix C.

\begin{figure*}[t] 
\centering 
\centerline{\includegraphics[width=1.0\textwidth]{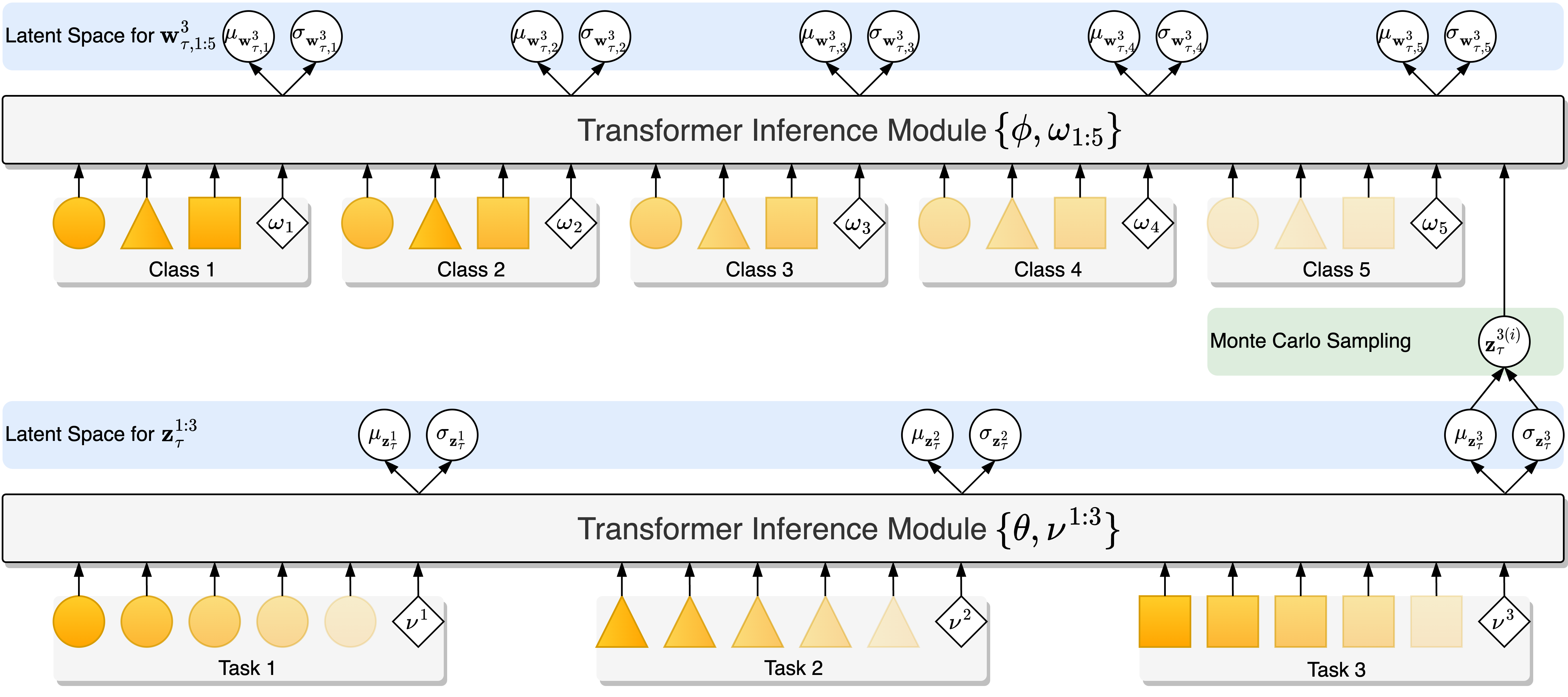}}
\caption{\textbf{A diagram of transformer-structured inference modules of HNPs for the first meta-training episode in Figure 1 under the \texttt{3-task 5-way 1-shot} setting.} For clarity, we display the inference process of the local latent parameters specific to the third task in the episode.}
\label{fig: diagram}
\end{figure*} 

\subsection{Transformer-Structured Inference Module}
\label{sec: method/transformer}

In order to infer the prior and variational posterior distributions in Eq.~(\ref{eq: ELBO}), it is essential to develop well-designed approximate inference modules. This is non-trivial and closely related to the performance of HNPs.
Here we adopt a transformer structure as the inference module to better exploit task-relatedness from the meta-knowledge and the context sets in the episode. More specifically, the previously mentioned meta-knowledge $\omega = \omega_{1:O}$ and $\nu^{1:M}$ are instantiated as learnable tokens to induce the distributions of hierarchical latent variables in the proposed model.

Without loss of generality, in the next, we provide an example of transformer-structured inference modules for prior distributions in classification scenarios.  Details of the inference modules in regression scenarios can be found in Appendix D. 
In Figure~\ref{fig: diagram}, a diagram of the transformer-structured inference modules is displayed under the \texttt{3-task 5-way 1-shot} setting.
In this case, the number of context samples is the same as the size of the target space, and thus we have $\mathcal{C}_{\tau}^m = \{\bar{x}^m_{\tau, o}, \bar{y}^m_{\tau, o}\}^{O}_{o=1}$, where $O$ is set as 5. In episodic training, labels in context sets are always available during inference. 

\paragraph{Transformer-Structured Inference Module $\{\theta, \nu^{m}\}$ for~$\mathbf{z}_{\tau}^{m}$.} 
In the proposed HNPs, each global latent representation encodes task-specific information relevant to the considered task in the episode as $p_{\theta}(\mathbf{z}^m_{\tau}|\mathcal{C}_{\tau}^{m}; \nu^m)$. The learnable token $\nu^m$ preserves the meta-knowledge from previous episodes for specific tasks, which are sampled from the corresponding task distribution $p(\mathcal{I}^m)$. 
The role of $\nu^m$ is to help the model adapt efficiently to such specific tasks in meta-test episodes. 

In detail, we set the dimension of the learnable token $\nu^m$ to the same as that of the features $\bar{x}^{m}_{\tau, 1:O}$. 
Then the transformer-structured inference module $\theta$ fuses them in a unified manner by taking $[\bar{x}^{m}_{\tau, 1:O}; \nu^m]$ as the input. 
The module $\theta$ outputs the mean and variance of the corresponding prior distribution. 
The inference steps for the global latent representation $\mathbf{z}^m_{\tau}$ are:
\begin{equation}
    [\widetilde{x}^{m}_{\tau, 1:O}; \widetilde{\nu}^m] = \texttt{MSA}( \texttt{LN} ([\bar{x}^{m}_{\tau, 1:O}; \nu^m])) + [\bar{x}^{m}_{\tau, 1:O}; \nu^m],
\label{eq: z_1}
\end{equation}
\begin{equation}
    [\widehat{x}^{m}_{\tau, 1:O}; \widehat{\nu}^m] = \texttt{MLP}( \texttt{LN} ([\widetilde{x}^{m}_{\tau, 1:O}; \widetilde{\nu}^m])) + [\widetilde{x}^{m}_{\tau, 1:O}; \widetilde{\nu}^m],
\label{eq: z_2}
\end{equation}
\begin{equation}
    p_{\theta}(\mathbf{z}^m_{\tau}|\mathcal{C}_{\tau}^{m}; \nu^m)  = \mathcal{N}(\mathbf{z}^m_{\tau} ; \mu_{\mathbf{z}^{m}_{\tau}}, \sigma_{\mathbf{z}^{m}_{\tau}}),
\label{eq: z_3}
\end{equation}
where~$\mu_{\mathbf{z}^{m}_{\tau}} = \texttt{MLP}(\widehat{\nu}^m), \sigma_{\mathbf{z}^{m}_{\tau}}=\texttt{Softplus}(\texttt{MLP}(\widehat{\nu}^m))$. The transformer-structured inference module includes a multi-headed self-attention ($\texttt{MSA}$) and three multi-layer perceptrons ($\texttt{MLP}$). The layer normalization ($\texttt{LN}$) is "pre-norm" as done in~\cite{kim2021vilt}.  \texttt{Softplus} is the activation function to output the appropriate value as the variance of the prior distribution~\cite{sonderby2016ladder}. 

\paragraph{Transformer-Structured Inference Module $\{\phi, \omega_{1:O}\}$ for $\mathbf{w}_{\tau, 1:O}^m$.} Likewise, each learnable token $\omega_{o}$ corresponds to a local latent parameter $\mathbf{w}_{\tau, o}^m$. 
With the learnable tokens $\omega_{1:O}$, we reformulate the prior distribution of local latent parameters as $p_{\phi}(\mathbf{w}^m_{\tau, 1:O} | \mathbf{z}_{\tau}^m, \mathcal{C}_{\tau}^{1:M}; \omega_{1:O})$. 
In this way, we learn the shared knowledge, inductive biases across all tasks, and their distribution at a parameter level, which in practical settings can capture epistemic uncertainty.

To be specific, the prior distribution can be factorized as $\prod_{o=1}^O p_{\phi}(\mathbf{w}^m_{\tau, o} | \mathbf{z}_{\tau}^m, \mathcal{C}_{\tau}^{1:M}; \omega_o)$, where all local latent parameters are assumed to be conditionally independent.
For each local latent parameter $\mathbf{w}^m_{\tau, o}$, the transformer-structured inference module $\phi$ takes $[\bar{x}^{1:M}_{\tau, o}, \omega_{o}]$ as input and outputs the corresponding prior distribution, where $\bar{x}^{1:M}_{\tau, o}$ are deep features from the same class $o$ in the episode and $\omega_{o}$ is the corresponding learnable token. Here the inference steps for $\mathbf{w}^m_{\tau, o}$ are as follows:
\begin{equation}
    [\widetilde{x}^{1:M}_{\tau, o}; \widetilde{\omega}_o] = \texttt{MSA}( \texttt{LN} ([\bar{x}^{1:M}_{\tau, o}; \omega_o])) + [\bar{x}^{1:M}_{\tau, o}; \omega_o],
\label{eq: w_1}
\end{equation}
\begin{equation}
    [\widehat{x}^{1:M}_{\tau, o}; \widehat{\omega}_o] = \texttt{MLP}( \texttt{LN} ([\widetilde{x}^{1:M}_{\tau, o}; \widetilde{\omega}_o])) + [\widetilde{x}^{1:M}_{\tau, o}; \widetilde{\omega}_o],
\label{eq: w_2}
\end{equation}
\begin{equation}
\begin{aligned}
     p_{\phi}(\mathbf{w}^m_{\tau, o}  | \mathbf{z}_{\tau}^m, \mathcal{C}_{\tau}^{1:M}; \omega_o)  =  \mathcal{N}(\mathbf{w}^m_{\tau, o}; 
    \mu_{\mathbf{w}^m_{\tau, o}},\sigma_{\mathbf{w}^m_{\tau, o}}),
\end{aligned}
\label{eq: w_3}
\end{equation}where
$\mu_{\mathbf{w}^m_{\tau, o}}=\texttt{MLP}(\widehat{\omega}_o, {\mathbf{z}^{m}_{\tau}}^{(i)}), 
\sigma_{\mathbf{w}^m_{\tau, o}}=
\texttt{Softplus}(\texttt{MLP}(\widehat{\omega}_o, {\mathbf{z}^{m}_{\tau}}^{(i)}))$. ${\mathbf{z}^{m}_{\tau}}^{(i)}$ is a Monte Carlo sample from the variational posterior of the corresponding global latent representation during meta-training. 

Both transformer-structured inference modules use the refined tokens $\widehat{\nu}^m$ and $\widehat{\omega}_o$ to obtain a global latent representation and a local latent parameter, respectively. The introduced tokens preserve the specific meta-knowledge for each latent variable during inference. Compared with the $\theta$-parameterised inference module exploring the intra-task relationships, the $\phi$-parameterised inference module enables the exploitation of the inter-task relationships to reason over each local latent parameter. Thus, the introduced tokens can be refined with relevant information from the heterogeneous context sets. By integrating meta-knowledge and heterogeneous context sets, HNPs can reduce the negative transfer of task-specific knowledge among heterogeneous tasks in each episode. Please refer to Appendix E for algorithms.

\section{Related Work}
\label{sec: related work}

\paragraph{Multi-Task Learning.}
Multi-task learning can operate in various settings~\cite{zhang2021survey}. Here we roughly separate the settings of MTL into two branches: 
(1) Single-input multi-output (SIMO)~\cite{kendall2018multi, liu2019end, misra2016cross, liu2020towards, sener2018multi, zamir2018taskonomy, bhattacharjee2022mult}, where tasks are defined by different supervision information for the same input. 
(2) Multi-input multi-output (MIMO)~\cite{shen2021variational, long2017learning, zhang2021multi, shen2022association, liang2022scheduled, zhang2022learning}, where heterogeneous tasks follow different data distributions. 
This work considers the MIMO setup of multi-task learning with episodic training.

In terms of modeling methods, from a processing perspective, existing MTL methods can be roughly categorized into two groups: (1)~Probabilistic MTL methods~\cite{shen2021variational, kim2021multi, bakker2003task, yu2005learning, titsias2011spike, lawrence2004learning, yousefi2019multi, oyen2012leveraging, qian2020multi}, which employ the Bayes framework to characterize probabilistic dependencies among tasks. 
(2)~Deep MTL models ~\cite{long2017learning, shen2022association, kendall2018multi, liu2019end, misra2016cross, strezoski2019learning, sun2019adashare, strezoski2019many, yu2020gradient, liu2021conflict, fifty2021efficiently, guo2020learning}, which directly utilize deep neural networks to discover information-sharing mechanisms across tasks. 
However, deep MTL models rely on large amounts of training data and tend to overfit when encountering the data-insufficiency problem. 
Meanwhile, previous probabilistic MTL methods consider a small number of tasks that occur at the same time, limiting their applicability in real-world systems. 

\paragraph{Meta-Learning.}
Meta-learning aims to find strategies to quickly adapt to unseen tasks with a few examples ~\cite{thrun1998learning, vinyals2016matching, finn2017model, hospedales2021meta}. 
There exist a couple of branches in meta-learning methods, such as metrics-based 
methods~\cite{snell2017prototypical, allen2019infinite, oreshkin2018tadam, yoon2019tapnet, garcia2018few,cao2019theoretical,triantafillou2019meta, sung2018learning} and optimization-based methods~\cite{finn2017model, nichol2018first, rusu2018meta, sung2017learning, li2017meta, gordon2018meta, edwards2016towards,finn2018probabilistic, saemundsson2018meta, baik2021learning, choi2021test, raghu2019rapid}.
Our paper focuses on a probabilistic meta-learning method, namely neural processes, that can quantify predictive uncertainty.
Models in this family~\cite{requeima2019fast, garnelo2018neural, garnelo2018conditional,   kim2019attentive, wang2022learning, wang2020doubly, wei2021meta, markou2022practical, ye2022contrastive, kim2022neural, wang2023bridge} can approximate stochastic processes in neural networks. 
Vanilla NPs~\cite{garnelo2018neural} usually encounter the expressiveness bottleneck because their functional priors are not rich enough to generate complicated functions~\cite{kim2019attentive, wang2022learning}.
~\cite{requeima2019fast} introduces deterministic variables to model predictive distributions for meta-learning scenarios directly.
Most NP-based methods only focus on a single task during inference~\cite{requeima2019fast, garnelo2018neural, kim2019attentive, wang2022learning, bruinsma2023autoregressive}, which leaves task-relatedness between heterogeneous tasks in a single episode an open problem. 

This paper combines multi-task learning and meta-learning paradigms to tackle the data-insufficiency problem.
Our work shares the high-level goal of exploiting task-relatedness in an episode with~\cite{kim2021multi, song2022efficient, upadhyay2023multi}. 
Concerning the multi-task scenarios, the main differences are: ~\cite {kim2021multi, song2022efficient, upadhyay2023multi} handles multiple attributes and multi-sensor data under the SIMO setting, while our work performs for the MIMO setting where tasks are heterogeneous and distribution shifts exist. 
Moreover, \cite{wang2021bridging} theoretically addresses the conclusion that MTL methods are powerful and efficient alternatives to gradient-based meta-learning algorithms. However, our method inherits the advantages of multi-task learning and meta-learning: simultaneously generalizing meta-knowledge from past to new episodes and exploiting task-relatedness across heterogeneous tasks in every single episode. Thus, our method is more suitable for solving the data-insufficiency problem. Intuitive comparisons with related paradigms such as~\textit{cross-domain few-shot learning}~\cite{tseng2020cross, chen2019closer, guo2020broader, du2020metanorm, das2022confess, li2022cross}, \textit{multimodal meta-learning}~\cite{ vuorio2019multimodal, vuorio2018toward, yao2019hierarchically, abdollahzadeh2021revisit, chen2021hetmaml, triantafillou2019meta} and~\textit{cross-modality few-shot learning}~\cite{xing2019adaptive, pahde2018cross, pahde2021multimodal} are provided in Appendix A.

\section{Experiments}
\label{sec: experiment}

We evaluate the proposed HNPs and baselines on three benchmark datasets under the episodic multi-task setup. 
Sec.~\ref{experiment: MMTL_regression} and Sec.~\ref{experiment: MMTL_classification} provide experimental results for regression and classification, respectively. Ablation studies are in Sec.~\ref{experiment: MMTL_ablation}. 
More comparisons with recent works on extra datasets are provided in Appendix F. Additional results under the convectional MIMO setup without episodic training can be found in Appendix G \& H.

\subsection{Episodic Multi-Task Regression}
\label{experiment: MMTL_regression}

\paragraph{Dataset and Settings.} 
To evaluate the benefit of HNPs over typical NP baselines in uncertainty quantification, we conduct experiments in several 1D regression tasks.
The baselines include conditional neural processes (CNPs~\cite{garnelo2018conditional}), vanilla neural processes (NPs~\cite{garnelo2018neural}), and attentive neural processes (ANPs~\cite{kim2019attentive}). 
As a toy example, we construct multiple tasks with different task distributions: each task's input set is defined on separate intervals without overlap. 

\begin{wrapfigure}{r}{0.50\textwidth}
\centering
\vspace{-4mm}
\includegraphics[width=1.0\linewidth]{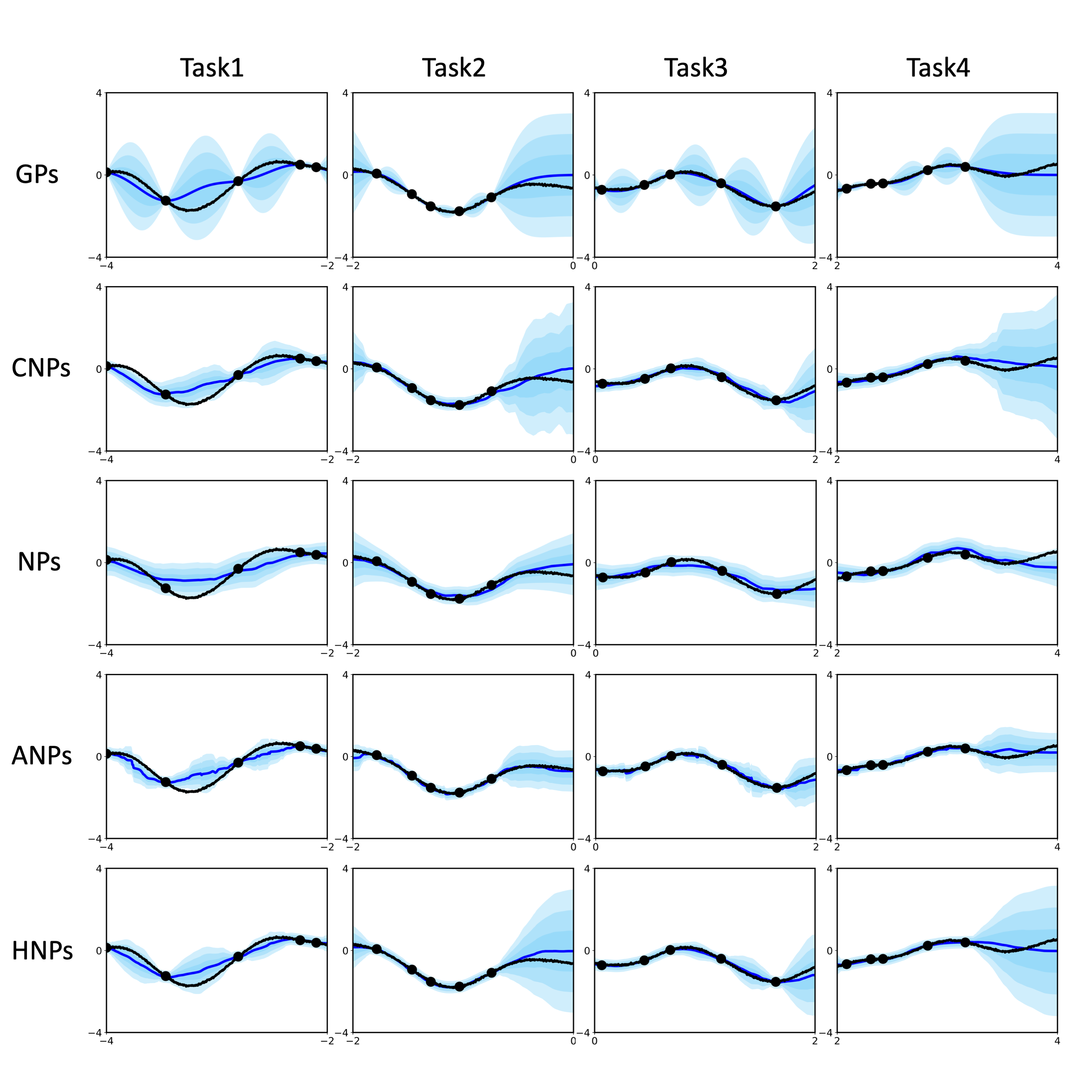}
\vspace{-6mm}
\caption{\textbf{Performance comparisons on the episodic multi-task $1$-D function regression using $5$ context points (black dots) for each task.} Black curves are ground truth, and blue ones are predicted results. The shadow regions are $\pm3$ standard derivations from the mean~\cite{wang2020doubly}.
}
\label{fig: regression}
\vspace{-4mm}
\end{wrapfigure}

Given four different tasks in an episode, their input sets are $\mathbf{x}^{1:4}_{\tau}$. 
Each input set contains a few instances, drawn uniformly at random from separate intervals, such as $\mathbf{x}^1_{\tau} \in [-4, -2)$, $\mathbf{x}^2_{\tau} \in [-2, 0)$, $\mathbf{x}^3_{\tau} \in [0, 2)$, and $\mathbf{x}^4_{\tau} \in [2, 4)$. 
All tasks in an episode are related by sharing the same ground truth function. 
Following~\cite{garnelo2018neural, wang2020doubly}, function-fitting tasks are generated with Gaussian processes~(GPs). 
Here a zero mean Gaussian process $y^{(0)} \sim \mathcal{GP}(0, k(\cdot, \cdot))$ is used to produce $\mathbf{y}^{1:4}_{\tau}$ for the inputs from all tasks $\mathbf{x}^{1:4}_{\tau}$. A radial basis kernel $k(x, x{'}) = \sigma^2\exp(-(x-x{'})^2)/2l^2)$, with $l=0.4$ and $\sigma = 1.0$ is used. 

\paragraph{Results and Discussions.}
As shown in Figure~\ref{fig: regression}, CNPs, ANPs, and our HNPs exhibit more reasonable uncertainty than NPs in Figure~\ref{fig: regression}: lower variances are predicted around observed (context) points with higher variances around unobserved points.
Furthermore, NPs and ANPs detrimentally impact the smoothness of the predicted curves, whereas HNPs yield smoother predictive curves with reliable uncertainty estimation. These observations suggest that integrating correlation information across related tasks and meta-knowledge in HNPs can improve uncertainty quantification in multi-task regression.

\begin{wraptable}{r}{0.50\textwidth}
\vspace{-6mm}
\caption{\textbf{Average negative log-likelihoods over target points from all tasks.}}
\centering
\vspace{+2mm}
\begin{adjustbox}{max width =1.0\linewidth}
\begin{tabular}{lcccc}
\toprule
Methods & CNPs & NPs & ANPs & HNPs \\
\midrule
Avg. NLL & 0.0935 & 0.8649 & -0.1165 & -0.5207 \\
\bottomrule
\end{tabular}
\end{adjustbox}
\vspace{-4mm}
\label{tab: rebuttal8}
\end{wraptable} 
To quantify uncertainty we use the average negative log-likelihood (the lower, the better). As shown in Table~\ref{tab: rebuttal8}, our HNPs achieve a lower average negative log-likelihood than baselines, demonstrating our method's effectiveness in uncertainty estimation.

\subsection{Episodic Multi-task Classification} 
\label{experiment: MMTL_classification}

\paragraph{Datasets and Settings.}

We use \texttt{Office-Home}~\cite{venkateswara2017Deep} and \texttt{DomainNet}~\cite{peng2019moment} as episodic multi-task classification datasets. \texttt{Office-Home} contains images from four domains: Artistic~(A), Clipart~(C), Product~(P) and Real-world~(R). Each domain contains images from  $65$ categories collected from office and home environments. Note that all domains share the whole target space. The numbers of meta-training classes and meta-test classes are $40$ and $25$, respectively.
There are about $15,500$ images in total. 
\texttt{DomainNet} has six distinct domains: Clipart, Infograph, Painting, Quickdraw, Real and Sketch. It includes approximately 0.6 million images distributed over $345$ categories. The numbers of meta-training classes and meta-test classes are $276$ and $69$, respectively.
Here one domain corresponds to a specific task distribution in the episodic multi-task setting. 

When it comes to the episodic multi-task classification, we compare HNPs with the following three branches: 
(1)~\textit{Multi-task learning methods}: ERM~\cite{gulrajani2020search} directly expands the training set of the current task with samples of related tasks. 
VMTL~\cite{shen2021variational} is one of the state-of-the-art under the MIMO setting of multi-task learning.  
(2)~\textit{Meta-learning methods}: MAML~\cite{finn2017model}, Proto.Net~\cite{snell2017prototypical} and DGPs~\cite{wang2021learning} address each task separately with no mechanism to leverage task-relatedness in a single episode.
(3)~\textit{Methods from the NP family}: CNPs~\cite{garnelo2018conditional} and NPs~\cite{garnelo2018neural} are established methods in the NP family. TNP-D~\cite{nguyen2022transformer} is recent NP work in sequential decision-making for a single task in each episode. 

\paragraph{Results and Discussions.}
The experimental results for episodic multi-task classification on \texttt{Office-Home} and \texttt{DomainNet} are reported in Table~\ref{tab:meta_office-home}.  
We use the average accuracy across all task distributions as the evaluation metric. 
It can be seen that HNPs consistently outperform all baseline methods, demonstrating the effectiveness of HNPs in handling each task with limited data under the episodic multi-task classification setup.

NPs and CNPs do not work well under all episodic multi-task classification cases. This can be attributed to their limited expressiveness of the global representation and the weak capability to extract discriminative information from multiple contexts. 
In contrast, HNPs explicitly abstract discriminative information for each task in the episode with the help of local latent parameters, enhancing the expressiveness of the functional prior.

\begin{table}[t]
\caption{\textbf{Comparative results (95\% confidence interval) for episodic multi-task classification on  \texttt{Office-Home} and  \texttt{DomainNet}.} Best results are indicated in bold. }
\centering
\begin{adjustbox}{max width =1.0\linewidth}
\begin{tabular}{l|cccc|cccc}
\toprule
&\multicolumn{4}{c|}{\texttt{Office-Home}} & \multicolumn{4}{c}{\texttt{DomainNet}}  \\
\cline{2-9}
&\multicolumn{2}{c}{\texttt{4-task 5-way}} & \multicolumn{2}{c|}{\texttt{4-task 20-way}}  &\multicolumn{2}{c}{\texttt{6-task 5-way}} & \multicolumn{2}{c}{\texttt{6-task 20-way}}  \\
Method & \texttt{1-shot} & \texttt{5-shot} & \texttt{1-shot} & \texttt{5-shot} & \texttt{1-shot} & \texttt{5-shot} & \texttt{1-shot} & \texttt{5-shot} \\
\midrule
ERM
\cite{gulrajani2020search}
&66.04 \scriptsize{$\pm$0.61}
&73.62 \scriptsize{$\pm$0.55}	
&39.25 \scriptsize{$\pm$0.24}	
&47.14 \scriptsize{$\pm$0.18}
&59.95 \scriptsize{$\pm$0.52}
&68.52 \scriptsize{$\pm$0.44}
&38.62 \scriptsize{$\pm$0.22}	
&47.85 \scriptsize{$\pm$0.20}
\\
VMTL
\cite{shen2021variational}
&49.71 \scriptsize{$\pm$0.48}
&65.75 \scriptsize{$\pm$0.47}
&27.50 \scriptsize{$\pm$0.14}
&42.82 \scriptsize{$\pm$0.13}
&42.24 \scriptsize{$\pm$0.39}
&57.37 \scriptsize{$\pm$0.43}
&18.05 \scriptsize{$\pm$0.11}
&31.38 \scriptsize{$\pm$0.15}
\\
\midrule
MAML 
\cite{finn2017model}   
&60.58 \scriptsize{$\pm$0.60} 
&75.29 \scriptsize{$\pm$0.53} 
&34.29 \scriptsize{$\pm$0.19} 
&48.39 \scriptsize{$\pm$0.20} 
&53.21	\scriptsize{$\pm$0.46}	
&65.24	\scriptsize{$\pm$0.47}	
&17.10	\scriptsize{$\pm$0.12}	
&20.35	\scriptsize{$\pm$0.14} 
\\
Proto. Net.
\cite{snell2017prototypical}
&57.19 \scriptsize{$\pm$0.53}	
&74.97 \scriptsize{$\pm$0.46}
&32.72 \scriptsize{$\pm$0.18}
&49.75 \scriptsize{$\pm$0.16}
&53.71 \scriptsize{$\pm$0.48}	
&68.80 \scriptsize{$\pm$0.42}
&31.90 \scriptsize{$\pm$0.19}
&47.59 \scriptsize{$\pm$0.18}
\\
DGPs 
\cite{wang2021learning}
&65.89 \scriptsize{$\pm$0.53} 
&79.96 \scriptsize{$\pm$0.38} 
&31.48 \scriptsize{$\pm$0.18}
&49.46 \scriptsize{$\pm$0.18}  
&50.93 \scriptsize{$\pm$0.42} 
&63.32 \scriptsize{$\pm$0.38} 
&25.46 \scriptsize{$\pm$0.15} 
&38.63 \scriptsize{$\pm$0.17} 
\\
\midrule
CNPs 
\cite{garnelo2018conditional} 
&43.33	\scriptsize{$\pm$0.56}
&55.07	\scriptsize{$\pm$0.63}
&10.57	\scriptsize{$\pm$0.10}
&12.02	\scriptsize{$\pm$0.11}
&37.90	\scriptsize{$\pm$0.45}
&40.53	\scriptsize{$\pm$0.44}
&5.12	\scriptsize{$\pm$0.10}
&5.14	\scriptsize{$\pm$0.10}
\\
NPs
\cite{garnelo2018neural}
&33.66	\scriptsize{$\pm$0.48}
&53.99	\scriptsize{$\pm$0.60}
&5.25	\scriptsize{$\pm$0.16}
&11.40	\scriptsize{$\pm$0.11}
&20.58	\scriptsize{$\pm$0.51}
&20.53	\scriptsize{$\pm$0.53}
&5.12	\scriptsize{$\pm$0.09}
&5.11	\scriptsize{$\pm$0.09}
\\ 
TNP-D
\cite{nguyen2022transformer}
&65.49	\scriptsize{$\pm$0.53}
&78.94  \scriptsize{$\pm$0.43}
&41.61	\scriptsize{$\pm$0.22}
&59.19  \scriptsize{$\pm$0.21}
&49.10 \scriptsize{$\pm$0.42}
&67.39 \scriptsize{$\pm$0.40}
&28.83 \scriptsize{$\pm$0.17}
&47.69 \scriptsize{$\pm$0.18}
\\
\rowcolor{LightGray}
HNPs
&\textbf{76.29} {\scriptsize{$\pm$0.51}}
&\textbf{80.80} {\scriptsize{$\pm$0.42}}
&\textbf{51.82} {\scriptsize{$\pm$0.23}}	
&\textbf{59.97} {\scriptsize{$\pm$0.18}} 
&\textbf{62.36}	\scriptsize{$\pm$0.53}	
&\textbf{69.38}	\scriptsize{$\pm$0.42}	
&\textbf{39.32}	\scriptsize{$\pm$0.23}	
&\textbf{48.56}	\scriptsize{$\pm$0.19} 
\\
\bottomrule
\end{tabular}
\end{adjustbox}
\label{tab:meta_office-home}
\vspace{-6mm}
\end{table}

We also find that HNPs significantly surpass other baselines on \texttt{1-shot} \texttt{Office-Home} and \texttt{DomainNet}, both under the \texttt{4/6-task 5-way} and \texttt{4/6-task 20-way} settings. 
This further implies that HNPs can circumvent the effect of the problem of data-insufficiency by simultaneously exploiting task-relatedness across heterogeneous tasks and meta-knowledge among episodes.

\subsection{Ablation Studies}
\label{experiment: MMTL_ablation}

\paragraph{Influence of Hierarchical Latent Variables.} 
We first investigate the roles of the global latent representation $\mathbf{z}^m_{\tau}$ and the local latent parameters $\mathbf{w}^m_{\tau, 1:O}$ by leaving out individual inference modules. 
These experiments are performed on $\texttt{Office-home}$ under the \texttt{4-task 5-way 1-shot} setting. 
We report the detailed performance for tasks sampled from a single task distribution (A/C/P/R) and the average accuracy across all task distributions (Avg.) in Table~\ref{tab:ablation_both}. 
The variants without specific latent variables are included in the comparison by removing the corresponding inference modules. 

\begin{wraptable}{r}{0.60\textwidth}
\vspace{-6mm}
\caption{\textbf{Effectiveness of global latent representations $\mathbf{z}_{\tau}^{m}$ and local latent parameters $\mathbf{w}_{\tau, 1:O}^{m}$ in the model.} \ymark~and~\xmark~denote whether the variants of HNPs have the corresponding latent variable or not.}
\centering
\vspace{+2mm}
\begin{adjustbox}{max width =1.0\linewidth}
\begin{tabular}{ccccccc}
\toprule
$\mathbf{z}_{\tau}^{m}$ &  $\mathbf{w}_{\tau, 1:O}^{m}$ & \textbf{A} & \textbf{C} & \textbf{P} & \textbf{R} & \textbf{Avg.} \\ 
\midrule
\xmark & \xmark  
& 62.64~{\scriptsize$\pm$0.72} & 56.87~{\scriptsize$\pm$0.71} & 75.18~{\scriptsize$\pm$0.79} & 73.68~{\scriptsize$\pm$0.77} & 67.09~{\scriptsize$\pm$0.63} \\ 
\xmark & \ymark 
& 69.39~{\scriptsize$\pm$0.60} & 63.10~{\scriptsize$\pm$0.61} & 80.66~{\scriptsize$\pm$0.67} & 79.99~{\scriptsize$\pm$0.62} & 73.29~{\scriptsize$\pm$0.51} \\ 
\ymark & \xmark 
& 67.02~{\scriptsize$\pm$0.67} & 60.70~{\scriptsize$\pm$0.69} & 78.26~{\scriptsize$\pm$0.76} & 78.47~{\scriptsize$\pm$0.72} & 71.11~{\scriptsize$\pm$0.59} \\ 
\rowcolor{LightGray}
\ymark & \ymark 
& \textbf{73.31~{\scriptsize$\pm$0.63}} & \textbf{64.92~{\scriptsize$\pm$0.68}} & \textbf{83.38~{\scriptsize$\pm$0.66}} & \textbf{83.54~{\scriptsize$\pm$0.64}} & \textbf{76.29~{\scriptsize$\pm$0.51}} \\
\bottomrule
\end{tabular}
\end{adjustbox}
\label{tab:ablation_both}
\vspace{-2mm}
\end{wraptable}

As shown in Table~\ref{tab:ablation_both}, both $\mathbf{z}^m_{\tau}$ and $\mathbf{w}^m_{\tau, 1:O}$ benefit overall performance. 
Our method with hierarchical latent variables performs $9.20\%$ better than the variant without both latent variables,  $3.00\%$ better than the variant without $\mathbf{z}^m_{\tau}$, and $5.18\%$ better than the variant without $\mathbf{w}^m_{\tau, 1:O}$. 
This indicates that latent variables of HNPs complement each other in representing context sets from multiple tasks and meta-knowledge. 
The variant without $\mathbf{w}^m_{\tau, 1:O}$ underperforms the variant without $\mathbf{z}^m_{\tau}$ by $2.18\%$, in terms of the average accuracy. This demonstrates that $\mathbf{z}^m_{\tau}$ suffers more from the expressiveness bottleneck than $\mathbf{w}^m_{\tau, 1:O}$, weakening the models' discriminative ability. For classification, local latent parameters are more crucial than a global latent representation in revealing the discriminating knowledge from multiple heterogeneous context sets.

\paragraph{Influence of Transformer-Structured Inference Modules.} 

To further understand our transformer-structured inference modules (Trans. w learnable tokens), we examine the performance against two other options: inference modules that solely utilize a multi-layer perceptron (MLP) and the variants that do not incorporate any learnable tokens (Trans. w/o learnable tokens). We also compare the probabilistic and deterministic versions of such inference modules. The deterministic variants consider the deterministic embedding for the hierarchical latent variables. 

\begin{wraptable}{r}{0.60\textwidth}
\vspace{-6mm}
\caption{\textbf{Performance comparisons between our transformer inference modules (Trans. w learnable tokens) and other alternatives.}}
\centering
\vspace{+2mm}
\begin{adjustbox}{max width =1.0\linewidth}
\begin{tabular}{cc|cc}
\toprule
\multicolumn{2}{c|}{Inference networks}  & 1-shot & 5-shot \\ 
\midrule
\multirow{3}{*}{Deterministic} 
&MLP
& 64.93~{\scriptsize$\pm$0.66} & 72.39~{\scriptsize$\pm$0.56} \\
&Trans. w/o learnable tokens 
& 70.22~{\scriptsize$\pm$0.62} &  76.15~{\scriptsize$\pm$0.54}\\ 
&Trans. w learnable tokens
& 70.61~{\scriptsize$\pm$0.56} &  76.70~{\scriptsize$\pm$0.50}\\
\midrule
\multirow{3}{*}{Probabilistic}
&MLP
& 73.30~{\scriptsize$\pm$0.59} & 77.94~{\scriptsize$\pm$0.48} \\
&Trans. w/o learnable tokens  
& 75.25~{\scriptsize$\pm$0.55} & 80.42~{\scriptsize$\pm$0.47} \\ 
&Trans. w learnable tokens
& \textbf{76.29~{\scriptsize$\pm$0.51}} &\textbf{80.80~{\scriptsize$\pm$0.42}}  \\
\bottomrule
\end{tabular}
\end{adjustbox}
\vspace{-4mm}
\label{tab:ablation_inference}
\end{wraptable}
As shown in Table~\ref{tab:ablation_inference}, our inference modules consistently outperform the variants, regardless of whether the inference network is probabilistic or deterministic. When using the probabilistic one, our inference modules respectively achieve $1.04\%$ and $2.99\%$ performance gains over Trans. w/o learnable tokens and MLP under the \texttt{4-task 5-way 1-shot} setting. 
This implies the importance of learnable tokens and task-relatedness in formulating transformer-structured inference modules, which reduces negative transfer among heterogeneous tasks in each meta-test episode. Moreover, the variants with probabilistic inference modules consistently beat deterministic ones in performance, demonstrating the advantages of considering uncertainty during modeling and inference. 

\begin{wraptable}{r}{0.60\textwidth}
\vspace{-6mm}
\caption{\textbf{Performance comparisons of different implementations of generating each local latent parameter $\mathbf{w}^m_{\tau, o}$ from the condition $\mathbf{z}^m_{\tau}$ and $\mathcal{C}_{\tau}^{1:M}$.}}
\centering
\vspace{+2mm}
\begin{adjustbox}{max width =1.0\linewidth}
\begin{tabular}{cccccc}
\hline
\toprule
Methods & \textbf{A} & \textbf{C} & \textbf{P} & \textbf{R} & \textbf{Avg.} \\ 
\midrule
\texttt{Concat} 
& 65.69~{\scriptsize$\pm$0.59} & 58.64~{\scriptsize$\pm$0.61} & 77.54~{\scriptsize$\pm$0.68} & 77.10~{\scriptsize$\pm$0.64} & 69.74~{\scriptsize$\pm$0.51} \\ 
\texttt{Add} 
& 69.92~{\scriptsize$\pm$0.69} & 63.73~{\scriptsize$\pm$0.71} & 78.81~{\scriptsize$\pm$0.77} & 79.03~{\scriptsize$\pm$0.78} & 72.87~{\scriptsize$\pm$0.61} \\ 
\rowcolor{LightGray}
\texttt{Ours} 
& \textbf{73.31~{\scriptsize$\pm$0.63}} & \textbf{64.92~{\scriptsize$\pm$0.68}} & \textbf{83.38~{\scriptsize$\pm$0.66}} & \textbf{83.54~{\scriptsize$\pm$0.64}} & \textbf{76.29~{\scriptsize$\pm$0.51}} \\
\bottomrule
\end{tabular}
\end{adjustbox}
\vspace{-4mm}
\label{tab:ablation_hierarchical}
\end{wraptable}

\paragraph{Effects of Different Ways to Generate Local Latent Parameters.} 
We investigate the effects of different ways to generate each $\mathbf{w}^m_{\tau, o}$ from the shared condition $\mathbf{z}^m_{\tau}$ and $\mathcal{C}_{\tau}^{1:M}$. Given a Monte Carlo sample of global latent variables as ${\mathbf{z}^m_{\tau}}^{(i)}$, in Table~\ref{tab:ablation_hierarchical}, we compare with two alternatives:
$\texttt{1)~Concat}$ directly concatenates each context feature and ${\mathbf{z}^m_{\tau}}^{(i)}$, and takes the concatenation as inputs of the transformer-structured inference network $\phi$.
$\texttt{2)~Add}$ sums up each context feature and  ${\mathbf{z}^m_{\tau}}^{(i)}$ and takes the result as the input. 
$\texttt{3)~Ours}$ incorporates ${\mathbf{z}^m_{\tau}}^{(i)}$ into the transformer-structured inference module by merging it with the refined learnable tokens in Eq. (\ref{eq: w_3}). As shown in Table ~\ref{tab:ablation_hierarchical}, $\texttt{Ours}$ consistently performs the best. This implies that incorporating the conditional variables into the inference module is more effective than the direct combinations of ${\mathbf{z}^m_{\tau}}^{(i)}$ and instance features.

\paragraph{Effects of More "Shots" or "Classes".}

To investigate the effects of more "shots" or "classes" in the episodic multi-task classification setup, we conduct experiments by increasing $K$ or $O$ in the defined \texttt{$M$-task $O$-way $K$-shot} setup. 

\begin{wraptable}{r}{0.60\textwidth}
\vspace{-7mm}
\caption{\textbf{Performance comparisons on \texttt{Office-Home} under the \texttt{4-task 5-way K-shot} setup.}}
\centering
\vspace{+1mm}
\begin{adjustbox}{max width =1.0\linewidth}
\begin{tabular}{ccccc}
\toprule
Methods & \texttt{1-shot} & \texttt{5-shot} & \texttt{10-shot} & \texttt{20-shot} \\
\midrule
TNP-D & 65.49~{\scriptsize$\pm$0.53} & 78.94~{\scriptsize$\pm$0.43} & 80.81~{\scriptsize$\pm$0.32} & 81.12~{\scriptsize$\pm$0.68} \\
\rowcolor{LightGray}
HNPs & 76.29~{\scriptsize$\pm$0.51} & 80.80~{\scriptsize$\pm$0.42} & 81.28~{\scriptsize$\pm$0.38} & 81.56~{\scriptsize$\pm$0.36}\\
\bottomrule
\end{tabular}
\end{adjustbox}
\vspace{-4mm}
\label{tab: rebuttal1}
\end{wraptable} 

As shown in Table~\ref{tab: rebuttal1}, the proposed HNPs have more advantages over the baseline method with the context data points below ten shots. With shots larger than ten, both methods will reach a performance bottleneck.

Moreover, Table~\ref{tab: rebuttal2} shows that our method consistently outperforms the baseline method as the number of classes increases from 20 to 40 in step 5. However, the performance gap between them narrows slightly with more classes. The main reason could be that the setting with more classes suffers from less data insufficiency.
\begin{table}[ht]
\vspace{-2mm}
\caption{\textbf{Performance comparisons on \texttt{DomainNet} under the \texttt{6-task O-way 1-shot} setup.}}
\centering
\vspace{-2mm}
\begin{adjustbox}{max width =1.0\linewidth}
\begin{tabular}{lcccccc}
\toprule
Methods & \texttt{5-way} & \texttt{20-way} & \texttt{25-way} & \texttt{30-way} & \texttt{35-way} & \texttt{40-way} \\
\midrule
TNP-D & 49.10~{\scriptsize$\pm$0.42} & 28.83~{\scriptsize$\pm$0.17} & 25.93~{\scriptsize$\pm$0.14} & 24.08~{\scriptsize$\pm$0.12} & 22.62~{\scriptsize$\pm$0.11} & 21.64~{\scriptsize$\pm$0.53} \\
\rowcolor{LightGray}
HNPs & 62.36~{\scriptsize$\pm$0.53} & 39.32~{\scriptsize$\pm$0.23} & 35.72~{\scriptsize$\pm$0.19} & 32.27~{\scriptsize$\pm$0.17} & 31.27~{\scriptsize$\pm$0.14} & 29.31~{\scriptsize$\pm$0.13}\\
\bottomrule
\end{tabular}
\end{adjustbox}
\vspace{-4mm}
\label{tab: rebuttal2}
\end{table}

\begin{wrapfigure}{r}{0.6\textwidth}
\centering
\vspace{-4mm}
\includegraphics[width=1.0\linewidth]{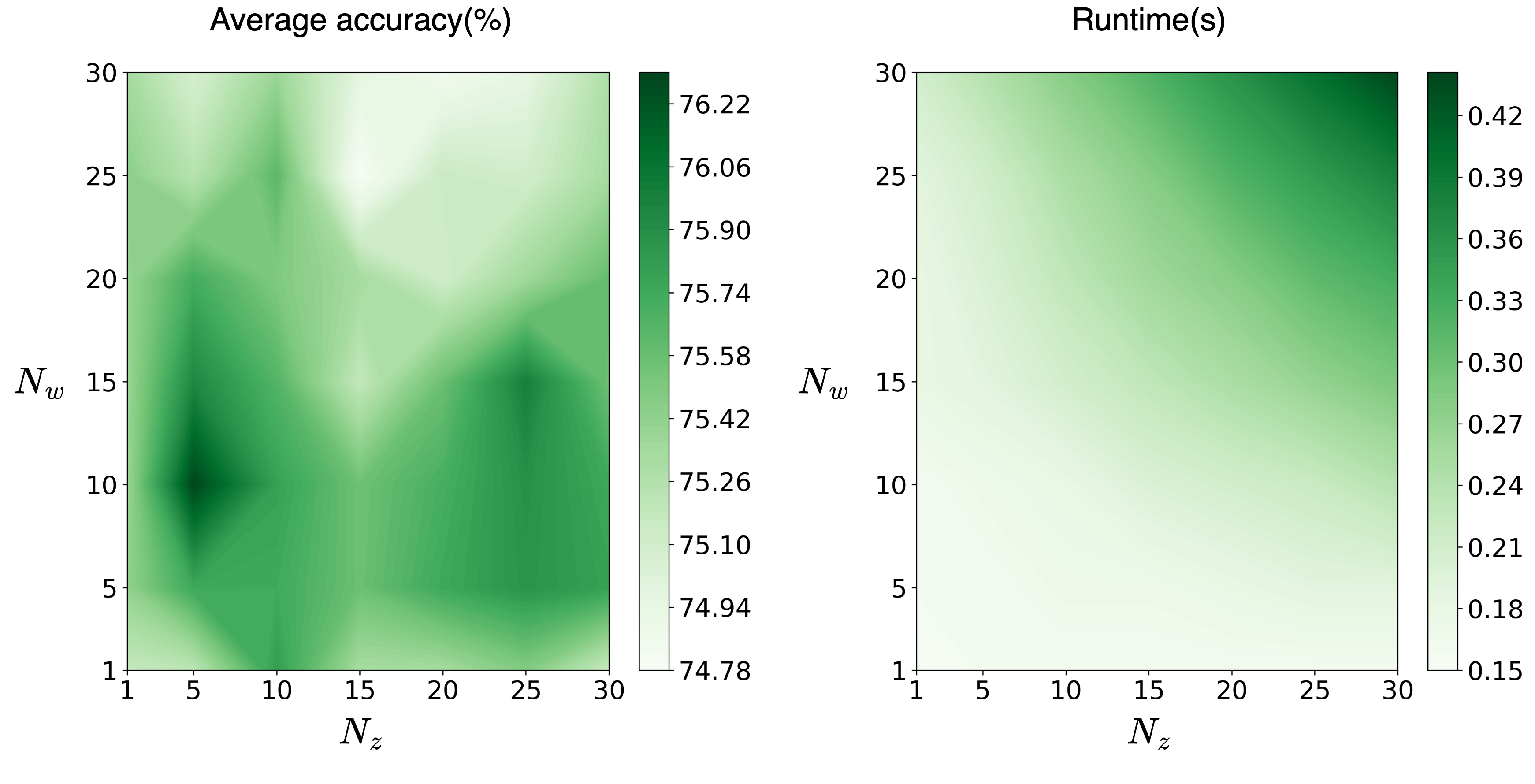}
\vspace{-4mm}
\caption{\textbf{Average accuracy and runtime of HNPs with different numbers of Monte Carlo samples.} $N_z$ and $N_w$ are sampling numbers of $\mathbf{z}^m_{\tau}$ and $\mathbf{w}^m_{\tau, 1:O}$, respectively.
}
\vspace{-4mm}
\label{fig:ablation_sampling}
\end{wrapfigure}
\paragraph{Sensitivity to the Number of Monte Carlo Samples.} For the hierarchical latent variables in the HNPs, we investigate the model's sensitivity to the number of Monte Carlo samples. Specifically, the sampling number of the global latent representation $\mathbf{z}^{m}_{\tau}$ and local latent parameters $\mathbf{w}^{m}_{\tau, 1:O}$ varies from $1$ to $30$. 
We examine on \texttt{Office-Home} under the \texttt{4-task 5-way 1-shot} setting.
In Figure~\ref{fig:ablation_sampling}, the runtime per iteration grows rapidly as the number of samples increases. However, there is no clear correlation between the performance and the number of Monte Carlo samples.
There are two sweet spots in terms of average accuracy, one of which has favorable computation time. Hence, we set $N_z$ and $N_w$ to $5$ and $10$, respectively.

\begin{wraptable}{r}{0.60\textwidth}
\vspace{-4mm}
\caption{\textbf{Inference time of different NP-based methods.}}
\centering
\vspace{+2mm}
\begin{adjustbox}{max width =1.0\linewidth}
\begin{tabular}{lcccc}
\toprule
Methods & CNPs & NPs & TNP-D & HNPs \\
\midrule
Inference time(s) & 0.04 & 0.05 & 0.08 & 0.15 \\
\bottomrule
\end{tabular}
\end{adjustbox}
\vspace{-4mm}
\label{tab: rebuttal6}
\end{wraptable} 

We also investigate the inference time of NP-based models per iteration on \texttt{ Office-Home} under the 4task5way1shot setup. As shown in Table~\ref{tab: rebuttal6}, our model needs more inference time than other NP-based methods for performance gains. The cost mainly comes from inferring the designed hierarchical latent variables; however, we consider this a worthwhile trade-off for the extra performance. 

\section{Conclusion}
\label{sec: conclusion}
\paragraph{Technical Discussion.} 
This work develops heterogeneous neural processes by introducing hierarchical latent variables and transformer-structured inference modules for episodic multi-task learning.
With the help of heterogeneous context information and meta-knowledge, the proposed model can exploit task-relatedness, reason about predictive function distributions, and efficiently distill past knowledge to unseen heterogeneous tasks with limited data.

\paragraph{Limitation \& Extension.}
Although the hierarchical probabilistic framework could mitigate the expressiveness bottleneck, the model needs more inference time than other NP-based methods for performance gains. Besides, the proposed method requires the target space to be the same across all tasks in a single episode. This requirement could limit the method's applicability in realistic scenarios where target spaces may differ across tasks. Our work could be extended to the new case without the shared target spaces, where the model should construct higher-order task-relatedness to improve knowledge sharing among tasks. Our code~{\footnote{https://github.com/autumn9999/HNPs.git}} is provided to facilitate such extensions.



\section*{Acknowledgment}
This work is financially supported by the Inception Institute of Artificial Intelligence, the University of Amsterdam and the allowance Top consortia for Knowledge and Innovation (TKIs) from the Netherlands Ministry of Economic Affairs and Climate Policy.

{
\small
\bibliography{main}

\begin{thebibliography}{116}
\providecommand{\natexlab}[1]{#1}
\providecommand{\url}[1]{\texttt{#1}}
\expandafter\ifx\csname urlstyle\endcsname\relax
  \providecommand{\doi}[1]{doi: #1}\else
  \providecommand{\doi}{doi: \begingroup \urlstyle{rm}\Url}\fi

\bibitem[Goodfellow et~al.(2016)Goodfellow, Bengio, and
  Courville]{Goodfellow-et-al-2016}
Ian Goodfellow, Yoshua Bengio, and Aaron Courville.
\newblock \emph{Deep Learning}.
\newblock MIT Press, 2016.
\newblock \url{http://www.deeplearningbook.org}.

\bibitem[Xu et~al.(2021)Xu, Wang, Wang, and Zhu]{xu2021weak}
Shichao Xu, Lixu Wang, Yixuan Wang, and Qi~Zhu.
\newblock Weak adaptation learning: Addressing cross-domain data insufficiency
  with weak annotator.
\newblock In \emph{IEEE Conference on Computer Vision and Pattern Recognition},
  2021.

\bibitem[Johnson and Khoshgoftaar(2019)]{johnson2019survey}
Justin~M Johnson and Taghi~M Khoshgoftaar.
\newblock Survey on deep learning with class imbalance.
\newblock \emph{Journal of Big Data}, 6\penalty0 (1):\penalty0 1--54, 2019.

\bibitem[Vinyals et~al.(2016)Vinyals, Blundell, Lillicrap, Wierstra,
  et~al.]{vinyals2016matching}
Oriol Vinyals, Charles Blundell, Timothy Lillicrap, Daan Wierstra, et~al.
\newblock Matching networks for one shot learning.
\newblock In \emph{Advances in Neural Information Processing Systems}, 2016.

\bibitem[Finn et~al.(2017)Finn, Abbeel, and Levine]{finn2017model}
Chelsea Finn, Pieter Abbeel, and Sergey Levine.
\newblock Model-agnostic meta-learning for fast adaptation of deep networks.
\newblock In \emph{International Conference on Machine Learning}, 2017.

\bibitem[Snell et~al.(2017)Snell, Swersky, and Zemel]{snell2017prototypical}
Jake Snell, Kevin Swersky, and Richard Zemel.
\newblock Prototypical networks for few-shot learning.
\newblock In \emph{Advances in Neural Information Processing Systems}, 2017.

\bibitem[Requeima et~al.(2019)Requeima, Gordon, Bronskill, Nowozin, and
  Turner]{requeima2019fast}
James Requeima, Jonathan Gordon, John Bronskill, Sebastian Nowozin, and
  Richard~E Turner.
\newblock Fast and flexible multi-task classification using conditional neural
  adaptive processes.
\newblock In \emph{Advances in Neural Information Processing Systems}, 2019.

\bibitem[Caruana(1997)]{caruana1997multitask}
Rich Caruana.
\newblock Multitask learning.
\newblock \emph{Machine learning}, 28\penalty0 (1):\penalty0 41--75, 1997.

\bibitem[Zhang and Yang(2021)]{zhang2021survey}
Yu~Zhang and Qiang Yang.
\newblock A survey on multi-task learning.
\newblock \emph{IEEE Transactions on Knowledge and Data Engineering}, 2021.

\bibitem[Long et~al.(2017)Long, Cao, Wang, and Philip]{long2017learning}
Mingsheng Long, Zhangjie Cao, Jianmin Wang, and S~Yu Philip.
\newblock Learning multiple tasks with multilinear relationship networks.
\newblock In \emph{Advances in neural information processing systems}, 2017.

\bibitem[Shen et~al.(2021)Shen, Zhen, Worring, and Shao]{shen2021variational}
Jiayi Shen, Xiantong Zhen, Marcel Worring, and Ling Shao.
\newblock Variational multi-task learning with gumbel-softmax priors.
\newblock In \emph{Advances in Neural Information Processing Systems}, 2021.

\bibitem[Garnelo et~al.(2018{\natexlab{a}})Garnelo, Schwarz, Rosenbaum, Viola,
  Rezende, Eslami, and Teh]{garnelo2018neural}
Marta Garnelo, Jonathan Schwarz, Dan Rosenbaum, Fabio Viola, Danilo~J Rezende,
  SM~Eslami, and Yee~Whye Teh.
\newblock Neural processes.
\newblock \emph{arXiv preprint arXiv:1807.01622}, 2018{\natexlab{a}}.

\bibitem[Garnelo et~al.(2018{\natexlab{b}})Garnelo, Rosenbaum, Maddison,
  Ramalho, Saxton, Shanahan, Teh, Rezende, and Eslami]{garnelo2018conditional}
Marta Garnelo, Dan Rosenbaum, Christopher Maddison, Tiago Ramalho, David
  Saxton, Murray Shanahan, Yee~Whye Teh, Danilo Rezende, and SM~Ali Eslami.
\newblock Conditional neural processes.
\newblock In \emph{International Conference on Machine Learning},
  2018{\natexlab{b}}.

\bibitem[Bruinsma et~al.(2023)Bruinsma, Markou, Requeima, Foong, Andersson,
  Vaughan, Buonomo, Hosking, and Turner]{bruinsma2023autoregressive}
Wessel Bruinsma, Stratis Markou, James Requeima, Andrew Y.~K. Foong, Tom
  Andersson, Anna Vaughan, Anthony Buonomo, Scott Hosking, and Richard~E
  Turner.
\newblock Autoregressive conditional neural processes.
\newblock In \emph{International Conference on Learning Representations}, 2023.

\bibitem[Kim et~al.(2019)Kim, Mnih, Schwarz, Garnelo, Eslami, Rosenbaum,
  Vinyals, and Teh]{kim2019attentive}
Hyunjik Kim, Andriy Mnih, Jonathan Schwarz, Marta Garnelo, Ali Eslami, Dan
  Rosenbaum, Oriol Vinyals, and Yee~Whye Teh.
\newblock Attentive neural processes.
\newblock In \emph{International Conference on Learning Representations}, 2019.

\bibitem[Wang and van Hoof(2022)]{wang2022learning}
Qi~Wang and Herke van Hoof.
\newblock Learning expressive meta-representations with mixture of expert
  neural processes.
\newblock In \emph{Advances in Neural Information Processing Systems}, 2022.

\bibitem[Ravi and Larochelle(2017)]{ravi2017optimization}
Sachin Ravi and Hugo Larochelle.
\newblock Optimization as a model for few-shot learning.
\newblock In \emph{International Conference on Learning Representations}, 2017.

\bibitem[Wang and Van~Hoof(2020)]{wang2020doubly}
Qi~Wang and Herke Van~Hoof.
\newblock Doubly stochastic variational inference for neural processes with
  hierarchical latent variables.
\newblock In \emph{International Conference on Machine Learning}, 2020.

\bibitem[Kim et~al.(2021{\natexlab{a}})Kim, Cho, Lee, and Hong]{kim2021multi}
Donggyun Kim, Seongwoong Cho, Wonkwang Lee, and Seunghoon Hong.
\newblock Multi-task processes.
\newblock \emph{arXiv preprint arXiv:2110.14953}, 2021{\natexlab{a}}.

\bibitem[Guo et~al.(2023)Guo, Lan, Zhang, Lu, and Chen]{guo2023versatile}
Zongyu Guo, Cuiling Lan, Zhizheng Zhang, Yan Lu, and Zhibo Chen.
\newblock Versatile neural processes for learning implicit neural
  representations.
\newblock In \emph{International Conference on Learning Representations}, 2023.

\bibitem[Klenke(2013)]{klenke2013probability}
Achim Klenke.
\newblock \emph{Probability theory: a comprehensive course}.
\newblock Springer Science \& Business Media, 2013.

\bibitem[Kim et~al.(2021{\natexlab{b}})Kim, Son, and Kim]{kim2021vilt}
Wonjae Kim, Bokyung Son, and Ildoo Kim.
\newblock Vilt: Vision-and-language transformer without convolution or region
  supervision.
\newblock In \emph{International Conference on Machine Learning},
  2021{\natexlab{b}}.

\bibitem[S{\o}nderby et~al.(2016)S{\o}nderby, Raiko, Maal{\o}e, S{\o}nderby,
  and Winther]{sonderby2016ladder}
Casper~Kaae S{\o}nderby, Tapani Raiko, Lars Maal{\o}e, S{\o}ren~Kaae
  S{\o}nderby, and Ole Winther.
\newblock Ladder variational autoencoders.
\newblock In \emph{Advances in Neural Information Processing Systems}, 2016.

\bibitem[Kendall et~al.(2018)Kendall, Gal, and Cipolla]{kendall2018multi}
Alex Kendall, Yarin Gal, and Roberto Cipolla.
\newblock Multi-task learning using uncertainty to weigh losses for scene
  geometry and semantics.
\newblock In \emph{IEEE Conference on Computer Vision and Pattern Recognition},
  2018.

\bibitem[Liu et~al.(2019)Liu, Johns, and Davison]{liu2019end}
Shikun Liu, Edward Johns, and Andrew~J Davison.
\newblock End-to-end multi-task learning with attention.
\newblock In \emph{IEEE Conference on Computer Vision and Pattern Recognition},
  2019.

\bibitem[Misra et~al.(2016)Misra, Shrivastava, Gupta, and
  Hebert]{misra2016cross}
Ishan Misra, Abhinav Shrivastava, Abhinav Gupta, and Martial Hebert.
\newblock Cross-stitch networks for multi-task learning.
\newblock In \emph{IEEE Conference on Computer Vision and Pattern Recognition},
  2016.

\bibitem[Liu et~al.(2020)Liu, Li, Kuang, Xue, Chen, Yang, Liao, and
  Zhang]{liu2020towards}
Liyang Liu, Yi~Li, Zhanghui Kuang, Jing-Hao Xue, Yimin Chen, Wenming Yang,
  Qingmin Liao, and Wayne Zhang.
\newblock Towards impartial multi-task learning.
\newblock In \emph{International Conference on Learning Representations}, 2020.

\bibitem[Sener and Koltun(2018)]{sener2018multi}
Ozan Sener and Vladlen Koltun.
\newblock Multi-task learning as multi-objective optimization.
\newblock In \emph{Advances in Neural Information Processing Systems}, 2018.

\bibitem[Zamir et~al.(2018)Zamir, Sax, Shen, Guibas, Malik, and
  Savarese]{zamir2018taskonomy}
Amir~R Zamir, Alexander Sax, William Shen, Leonidas~J Guibas, Jitendra Malik,
  and Silvio Savarese.
\newblock Taskonomy: Disentangling task transfer learning.
\newblock In \emph{Proceedings of the IEEE conference on computer vision and
  pattern recognition}, 2018.

\bibitem[Bhattacharjee et~al.(2022)Bhattacharjee, Zhang, S{\"u}sstrunk, and
  Salzmann]{bhattacharjee2022mult}
Deblina Bhattacharjee, Tong Zhang, Sabine S{\"u}sstrunk, and Mathieu Salzmann.
\newblock Mult: an end-to-end multitask learning transformer.
\newblock In \emph{Proceedings of the IEEE/CVF Conference on Computer Vision
  and Pattern Recognition}, 2022.

\bibitem[Zhang et~al.(2021{\natexlab{a}})Zhang, Zhang, and
  Wang]{zhang2021multi}
Yi~Zhang, Yu~Zhang, and Wei Wang.
\newblock Multi-task learning via generalized tensor trace norm.
\newblock In \emph{Proceedings of the 27th ACM SIGKDD Conference on Knowledge
  Discovery \& Data Mining}, 2021{\natexlab{a}}.

\bibitem[Shen et~al.(2022)Shen, Xiao, Zhen, Snoek, and
  Worring]{shen2022association}
Jiayi Shen, Zehao Xiao, Xiantong Zhen, Cees~GM Snoek, and Marcel Worring.
\newblock Association graph learning for multi-task classification with
  category shifts.
\newblock \emph{arXiv preprint arXiv:2210.04637}, 2022.

\bibitem[Liang et~al.(2022)Liang, Meng, Xu, Chen, and Zhou]{liang2022scheduled}
Yunlong Liang, Fandong Meng, Jinan Xu, Yufeng Chen, and Jie Zhou.
\newblock Scheduled multi-task learning for neural chat translation.
\newblock \emph{arXiv preprint arXiv:2205.03766}, 2022.

\bibitem[Zhang et~al.(2022{\natexlab{a}})Zhang, Zhang, and
  Wang]{zhang2022learning}
Yi~Zhang, Yu~Zhang, and Wei Wang.
\newblock Learning linear and nonlinear low-rank structure in multi-task
  learning.
\newblock \emph{IEEE Transactions on Knowledge and Data Engineering},
  2022{\natexlab{a}}.

\bibitem[Bakker and Heskes(2003)]{bakker2003task}
Bart Bakker and Tom Heskes.
\newblock Task clustering and gating for bayesian multitask learning.
\newblock \emph{Journal of Machine Learning Research}, 2003.

\bibitem[Yu et~al.(2005)Yu, Tresp, and Schwaighofer]{yu2005learning}
Kai Yu, Volker Tresp, and Anton Schwaighofer.
\newblock Learning gaussian processes from multiple tasks.
\newblock In \emph{International Conference on Machine Learning}, 2005.

\bibitem[Titsias and L{\'a}zaro-Gredilla(2011)]{titsias2011spike}
Michalis~K Titsias and Miguel L{\'a}zaro-Gredilla.
\newblock Spike and slab variational inference for multi-task and multiple
  kernel learning.
\newblock In \emph{Advances in neural information processing systems}, 2011.

\bibitem[Lawrence and Platt(2004)]{lawrence2004learning}
Neil~D Lawrence and John~C Platt.
\newblock Learning to learn with the informative vector machine.
\newblock In \emph{International Conference on Machine Learning}, 2004.

\bibitem[Yousefi et~al.(2019)Yousefi, Smith, and {\'A}lvarez]{yousefi2019multi}
Fariba Yousefi, Michael~Thomas Smith, and Mauricio~A {\'A}lvarez.
\newblock Multi-task learning for aggregated data using gaussian processes.
\newblock \emph{arXiv preprint arXiv:1906.09412}, 2019.

\bibitem[Oyen and Lane(2012)]{oyen2012leveraging}
Diane Oyen and Terran Lane.
\newblock Leveraging domain knowledge in multitask bayesian network structure
  learning.
\newblock In \emph{Proceedings of the AAAI Conference on Artificial
  Intelligence}, 2012.

\bibitem[Qian et~al.(2020)Qian, Chen, Zhang, Wen, and Gechter]{qian2020multi}
Weizhu Qian, Bowei Chen, Yichao Zhang, Guanghui Wen, and Franck Gechter.
\newblock Multi-task variational information bottleneck.
\newblock \emph{arXiv preprint arXiv:2007.00339}, 2020.

\bibitem[Strezoski et~al.(2019{\natexlab{a}})Strezoski, Noord, and
  Worring]{strezoski2019learning}
Gjorgji Strezoski, Nanne~van Noord, and Marcel Worring.
\newblock Learning task relatedness in multi-task learning for images in
  context.
\newblock In \emph{International Conference on Multimedia Retrieval},
  2019{\natexlab{a}}.

\bibitem[Sun et~al.(2019)Sun, Panda, Feris, and Saenko]{sun2019adashare}
Ximeng Sun, Rameswar Panda, Rogerio Feris, and Kate Saenko.
\newblock Adashare: Learning what to share for efficient deep multi-task
  learning.
\newblock \emph{arXiv preprint arXiv:1911.12423}, 2019.

\bibitem[Strezoski et~al.(2019{\natexlab{b}})Strezoski, Noord, and
  Worring]{strezoski2019many}
Gjorgji Strezoski, Nanne~van Noord, and Marcel Worring.
\newblock Many task learning with task routing.
\newblock In \emph{IEEE International Conference on Computer Vision},
  2019{\natexlab{b}}.

\bibitem[Yu et~al.(2020)Yu, Kumar, Gupta, Levine, Hausman, and
  Finn]{yu2020gradient}
Tianhe Yu, Saurabh Kumar, Abhishek Gupta, Sergey Levine, Karol Hausman, and
  Chelsea Finn.
\newblock Gradient surgery for multi-task learning.
\newblock \emph{arXiv preprint arXiv:2001.06782}, 2020.

\bibitem[Liu et~al.(2021)Liu, Liu, Jin, Stone, and Liu]{liu2021conflict}
Bo~Liu, Xingchao Liu, Xiaojie Jin, Peter Stone, and Qiang Liu.
\newblock Conflict-averse gradient descent for multi-task learning.
\newblock In \emph{Advances in Neural Information Processing Systems}, 2021.

\bibitem[Fifty et~al.(2021)Fifty, Amid, Zhao, Yu, Anil, and
  Finn]{fifty2021efficiently}
Chris Fifty, Ehsan Amid, Zhe Zhao, Tianhe Yu, Rohan Anil, and Chelsea Finn.
\newblock Efficiently identifying task groupings for multi-task learning.
\newblock In \emph{Advances in Neural Information Processing Systems}, 2021.

\bibitem[Guo et~al.(2020{\natexlab{a}})Guo, Lee, and Ulbricht]{guo2020learning}
Pengsheng Guo, Chen-Yu Lee, and Daniel Ulbricht.
\newblock Learning to branch for multi-task learning.
\newblock In \emph{International Conference on Machine Learning},
  2020{\natexlab{a}}.

\bibitem[Thrun and Pratt(1998)]{thrun1998learning}
Sebastian Thrun and Lorien Pratt.
\newblock Learning to learn: Introduction and overview.
\newblock In \emph{Learning to learn}, pages 3--17. Springer, 1998.

\bibitem[Hospedales et~al.(2021)Hospedales, Antoniou, Micaelli, and
  Storkey]{hospedales2021meta}
Timothy~M Hospedales, Antreas Antoniou, Paul Micaelli, and Amos~J Storkey.
\newblock Meta-learning in neural networks: A survey.
\newblock \emph{IEEE Transactions on Pattern Analysis and Machine
  Intelligence}, 2021.

\bibitem[Allen et~al.(2019)Allen, Shelhamer, Shin, and
  Tenenbaum]{allen2019infinite}
Kelsey~R Allen, Evan Shelhamer, Hanul Shin, and Joshua~B Tenenbaum.
\newblock Infinite mixture prototypes for few-shot learning.
\newblock In \emph{International Conference on Machine Learning}, 2019.

\bibitem[Oreshkin et~al.(2018)Oreshkin, Rodr{\'\i}guez~L{\'o}pez, and
  Lacoste]{oreshkin2018tadam}
Boris Oreshkin, Pau Rodr{\'\i}guez~L{\'o}pez, and Alexandre Lacoste.
\newblock Tadam: Task dependent adaptive metric for improved few-shot learning.
\newblock In \emph{Advances in neural information processing systems}, 2018.

\bibitem[Yoon et~al.(2019)Yoon, Seo, and Moon]{yoon2019tapnet}
Sung~Whan Yoon, Jun Seo, and Jaekyun Moon.
\newblock Tapnet: Neural network augmented with task-adaptive projection for
  few-shot learning.
\newblock In \emph{International Conference on Machine Learning}, 2019.

\bibitem[Garcia and Bruna(2018)]{garcia2018few}
Victor Garcia and Joan Bruna.
\newblock Few-shot learning with graph neural networks.
\newblock In \emph{International Conference on Learning Representations}, 2018.

\bibitem[Cao et~al.(2019)Cao, Law, and Fidler]{cao2019theoretical}
Tianshi Cao, Marc Law, and Sanja Fidler.
\newblock A theoretical analysis of the number of shots in few-shot learning.
\newblock \emph{arXiv preprint arXiv:1909.11722}, 2019.

\bibitem[Triantafillou et~al.(2019)Triantafillou, Zhu, Dumoulin, Lamblin, Evci,
  Xu, Goroshin, Gelada, Swersky, Manzagol, and
  Larochelle]{triantafillou2019meta}
Eleni Triantafillou, Tyler Zhu, Vincent Dumoulin, Pascal Lamblin, Utku Evci,
  Kelvin Xu, Ross Goroshin, Carles Gelada, Kevin Swersky, Pierre-Antoine
  Manzagol, and Hugo Larochelle.
\newblock Meta-dataset: A dataset of datasets for learning to learn from few
  examples.
\newblock \emph{arXiv preprint arXiv:1903.03096}, 2019.

\bibitem[Sung et~al.(2018)Sung, Yang, Zhang, Xiang, Torr, and
  Hospedales]{sung2018learning}
Flood Sung, Yongxin Yang, Li~Zhang, Tao Xiang, Philip~HS Torr, and Timothy~M
  Hospedales.
\newblock Learning to compare: Relation network for few-shot learning.
\newblock In \emph{IEEE Conference on Computer Vision and Pattern Recognition},
  2018.

\bibitem[Nichol et~al.(2018)Nichol, Achiam, and Schulman]{nichol2018first}
Alex Nichol, Joshua Achiam, and John Schulman.
\newblock On first-order meta-learning algorithms.
\newblock \emph{arXiv preprint arXiv:1803.02999}, 2018.

\bibitem[Rusu et~al.(2019)Rusu, Rao, Sygnowski, Vinyals, Pascanu, Osindero, and
  Hadsell]{rusu2018meta}
Andrei~A Rusu, Dushyant Rao, Jakub Sygnowski, Oriol Vinyals, Razvan Pascanu,
  Simon Osindero, and Raia Hadsell.
\newblock Meta-learning with latent embedding optimization.
\newblock In \emph{International Conference on Learning Representations}, 2019.

\bibitem[Sung et~al.(2017)Sung, Zhang, Xiang, Hospedales, and
  Yang]{sung2017learning}
Flood Sung, Li~Zhang, Tao Xiang, Timothy Hospedales, and Yongxin Yang.
\newblock Learning to learn: Meta-critic networks for sample efficient
  learning.
\newblock \emph{arXiv preprint arXiv:1706.09529}, 2017.

\bibitem[Li et~al.(2017)Li, Zhou, Chen, and Li]{li2017meta}
Zhenguo Li, Fengwei Zhou, Fei Chen, and Hang Li.
\newblock Meta-sgd: Learning to learn quickly for few-shot learning.
\newblock \emph{arXiv preprint arXiv:1707.09835}, 2017.

\bibitem[Gordon et~al.(2019)Gordon, Bronskill, Bauer, Nowozin, and
  Turner]{gordon2018meta}
Jonathan Gordon, John Bronskill, Matthias Bauer, Sebastian Nowozin, and
  Richard~E Turner.
\newblock Meta-learning probabilistic inference for prediction.
\newblock In \emph{International Conference on Learning Representations}, 2019.

\bibitem[Edwards and Storkey(2016)]{edwards2016towards}
Harrison Edwards and Amos Storkey.
\newblock Towards a neural statistician.
\newblock \emph{arXiv preprint arXiv:1606.02185}, 2016.

\bibitem[Finn et~al.(2018)Finn, Xu, and Levine]{finn2018probabilistic}
Chelsea Finn, Kelvin Xu, and Sergey Levine.
\newblock Probabilistic model-agnostic meta-learning.
\newblock In \emph{Advances in Neural Information Processing Systems}, 2018.

\bibitem[S{\ae}mundsson et~al.(2018)S{\ae}mundsson, Hofmann, and
  Deisenroth]{saemundsson2018meta}
Steind{\'o}r S{\ae}mundsson, Katja Hofmann, and Marc~Peter Deisenroth.
\newblock Meta reinforcement learning with latent variable gaussian processes.
\newblock \emph{arXiv preprint arXiv:1803.07551}, 2018.

\bibitem[Baik et~al.(2021)Baik, Oh, Hong, and Lee]{baik2021learning}
Sungyong Baik, Junghoon Oh, Seokil Hong, and Kyoung~Mu Lee.
\newblock Learning to forget for meta-learning via task-and-layer-wise
  attenuation.
\newblock \emph{IEEE Transactions on Pattern Analysis and Machine
  Intelligence}, 2021.

\bibitem[Choi et~al.(2021)Choi, Choi, Baik, Kim, and Lee]{choi2021test}
Myungsub Choi, Janghoon Choi, Sungyong Baik, Tae~Hyun Kim, and Kyoung~Mu Lee.
\newblock Test-time adaptation for video frame interpolation via meta-learning.
\newblock \emph{IEEE Transactions on Pattern Analysis and Machine
  Intelligence}, 2021.

\bibitem[Raghu et~al.(2019)Raghu, Raghu, Bengio, and Vinyals]{raghu2019rapid}
Aniruddh Raghu, Maithra Raghu, Samy Bengio, and Oriol Vinyals.
\newblock Rapid learning or feature reuse? towards understanding the
  effectiveness of maml.
\newblock \emph{arXiv preprint arXiv:1909.09157}, 2019.

\bibitem[Wei et~al.(2021)Wei, Zhao, and Huang]{wei2021meta}
Ying Wei, Peilin Zhao, and Junzhou Huang.
\newblock Meta-learning hyperparameter performance prediction with neural
  processes.
\newblock In \emph{International Conference on Machine Learning}, 2021.

\bibitem[Markou et~al.(2022)Markou, Requeima, Bruinsma, Vaughan, and
  Turner]{markou2022practical}
Stratis Markou, James Requeima, Wessel~P Bruinsma, Anna Vaughan, and Richard~E
  Turner.
\newblock Practical conditional neural processes via tractable dependent
  predictions.
\newblock \emph{arXiv preprint arXiv:2203.08775}, 2022.

\bibitem[Ye and Yao(2022)]{ye2022contrastive}
Zesheng Ye and Lina Yao.
\newblock Contrastive conditional neural processes.
\newblock In \emph{IEEE Conference on Computer Vision and Pattern Recognition},
  2022.

\bibitem[Kim et~al.(2022)Kim, Go, and Yun]{kim2022neural}
Mingyu Kim, Kyeongryeol Go, and Se-Young Yun.
\newblock Neural processes with stochastic attention: Paying more attention to
  the context dataset.
\newblock \emph{arXiv preprint arXiv:2204.05449}, 2022.

\bibitem[Wang et~al.(2023)Wang, Federici, and van Hoof]{wang2023bridge}
Qi~Wang, Marco Federici, and Herke van Hoof.
\newblock Bridge the inference gaps of neural processes via expectation
  maximization.
\newblock In \emph{International Conference on Learning Representations}, 2023.

\bibitem[Song et~al.(2022)Song, Zheng, Cao, Yu, and Bian]{song2022efficient}
Xiaozhuang Song, Shun Zheng, Wei Cao, James Yu, and Jiang Bian.
\newblock Efficient and effective multi-task grouping via meta learning on task
  combinations.
\newblock \emph{Advances in Neural Information Processing Systems}, 2022.

\bibitem[Upadhyay et~al.(2023)Upadhyay, Chhipa, Phlypo, Saini, and
  Liwicki]{upadhyay2023multi}
Richa Upadhyay, Prakash~Chandra Chhipa, Ronald Phlypo, Rajkumar Saini, and
  Marcus Liwicki.
\newblock Multi-task meta learning: learn how to adapt to unseen tasks.
\newblock In \emph{2023 International Joint Conference on Neural Networks
  (IJCNN)}. IEEE, 2023.

\bibitem[Wang et~al.(2021{\natexlab{a}})Wang, Zhao, and Li]{wang2021bridging}
Haoxiang Wang, Han Zhao, and Bo~Li.
\newblock Bridging multi-task learning and meta-learning: Towards efficient
  training and effective adaptation.
\newblock In \emph{International Conference on Machine Learning},
  2021{\natexlab{a}}.

\bibitem[Tseng et~al.(2020)Tseng, Lee, Huang, and Yang]{tseng2020cross}
Hung-Yu Tseng, Hsin-Ying Lee, Jia-Bin Huang, and Ming-Hsuan Yang.
\newblock Cross-domain few-shot classification via learned feature-wise
  transformation.
\newblock In \emph{International Conference on Learning Representations}, 2020.

\bibitem[Chen et~al.(2019)Chen, Liu, Kira, Wang, and Huang]{chen2019closer}
Wei-Yu Chen, Yen-Cheng Liu, Zsolt Kira, Yu-Chiang~Frank Wang, and Jia-Bin
  Huang.
\newblock A closer look at few-shot classification.
\newblock \emph{arXiv preprint arXiv:1904.04232}, 2019.

\bibitem[Guo et~al.(2020{\natexlab{b}})Guo, Codella, Karlinsky, Codella, Smith,
  Saenko, Rosing, and Feris]{guo2020broader}
Yunhui Guo, Noel~C Codella, Leonid Karlinsky, James~V Codella, John~R Smith,
  Kate Saenko, Tajana Rosing, and Rogerio Feris.
\newblock A broader study of cross-domain few-shot learning.
\newblock In \emph{European Conference on Computer Vision}, 2020{\natexlab{b}}.

\bibitem[Du et~al.(2021)Du, Zhen, Shao, and Snoek]{du2020metanorm}
Yingjun Du, Xiantong Zhen, Ling Shao, and Cees G~M Snoek.
\newblock {MetaNorm}: Learning to normalize few-shot batches across domains.
\newblock In \emph{International Conference on Learning Representations}, 2021.

\bibitem[Das et~al.(2022)Das, Yun, and Porikli]{das2022confess}
Debasmit Das, Sungrack Yun, and Fatih Porikli.
\newblock Confess: A framework for single source cross-domain few-shot
  learning.
\newblock In \emph{International Conference on Learning Representations}, 2022.

\bibitem[Li et~al.(2022)Li, Liu, and Bilen]{li2022cross}
Wei-Hong Li, Xialei Liu, and Hakan Bilen.
\newblock Cross-domain few-shot learning with task-specific adapters.
\newblock In \emph{Proceedings of the IEEE/CVF Conference on Computer Vision
  and Pattern Recognition}, pages 7161--7170, 2022.

\bibitem[Vuorio et~al.(2019)Vuorio, Sun, Hu, and Lim]{vuorio2019multimodal}
Risto Vuorio, Shao-Hua Sun, Hexiang Hu, and Joseph~J Lim.
\newblock Multimodal model-agnostic meta-learning via task-aware modulation.
\newblock In \emph{Advances in neural information processing systems}, 2019.

\bibitem[Vuorio et~al.(2018)Vuorio, Sun, Hu, and Lim]{vuorio2018toward}
Risto Vuorio, Shao-Hua Sun, Hexiang Hu, and Joseph~J Lim.
\newblock Toward multimodal model-agnostic meta-learning.
\newblock In \emph{arXiv preprint arXiv:1812.07172}, 2018.

\bibitem[Yao et~al.(2019)Yao, Wei, Huang, and Li]{yao2019hierarchically}
Huaxiu Yao, Ying Wei, Junzhou Huang, and Zhenhui Li.
\newblock Hierarchically structured meta-learning.
\newblock In \emph{International Conference on Machine Learning}, 2019.

\bibitem[Abdollahzadeh et~al.(2021)Abdollahzadeh, Malekzadeh, and
  Cheung]{abdollahzadeh2021revisit}
Milad Abdollahzadeh, Touba Malekzadeh, and Ngai-Man~Man Cheung.
\newblock Revisit multimodal meta-learning through the lens of multi-task
  learning.
\newblock In \emph{Advances in Neural Information Processing Systems}, 2021.

\bibitem[Chen and Zhang(2021)]{chen2021hetmaml}
Jiayi Chen and Aidong Zhang.
\newblock Hetmaml: Task-heterogeneous model-agnostic meta-learning for few-shot
  learning across modalities.
\newblock In \emph{Proceedings of the 30th ACM International Conference on
  Information \& Knowledge Management}, 2021.

\bibitem[Xing et~al.(2019)Xing, Rostamzadeh, Oreshkin, and
  O~Pinheiro]{xing2019adaptive}
Chen Xing, Negar Rostamzadeh, Boris Oreshkin, and Pedro~O O~Pinheiro.
\newblock Adaptive cross-modal few-shot learning.
\newblock In \emph{Advances in Neural Information Processing Systems}, 2019.

\bibitem[Pahde et~al.(2018)Pahde, J{\"a}hnichen, Klein, and
  Nabi]{pahde2018cross}
Frederik Pahde, Patrick J{\"a}hnichen, Tassilo Klein, and Moin Nabi.
\newblock Cross-modal hallucination for few-shot fine-grained recognition.
\newblock \emph{arXiv preprint arXiv:1806.05147}, 2018.

\bibitem[Pahde et~al.(2021)Pahde, Puscas, Klein, and Nabi]{pahde2021multimodal}
Frederik Pahde, Mihai Puscas, Tassilo Klein, and Moin Nabi.
\newblock Multimodal prototypical networks for few-shot learning.
\newblock In \emph{Proceedings of the IEEE/CVF Winter Conference on
  Applications of Computer Vision}, 2021.

\bibitem[Venkateswara et~al.(2017)Venkateswara, Eusebio, Chakraborty, and
  Panchanathan]{venkateswara2017Deep}
Hemanth Venkateswara, Jose Eusebio, Shayok Chakraborty, and Sethuraman
  Panchanathan.
\newblock Deep hashing network for unsupervised domain adaptation.
\newblock In \emph{IEEE Conference on Computer Vision and Pattern Recognition},
  2017.

\bibitem[Peng et~al.(2019)Peng, Bai, Xia, Huang, Saenko, and
  Wang]{peng2019moment}
Xingchao Peng, Qinxun Bai, Xide Xia, Zijun Huang, Kate Saenko, and Bo~Wang.
\newblock Moment matching for multi-source domain adaptation.
\newblock In \emph{IEEE International Conference on Computer Vision}, 2019.

\bibitem[Gulrajani and Lopez-Paz(2020)]{gulrajani2020search}
Ishaan Gulrajani and David Lopez-Paz.
\newblock In search of lost domain generalization.
\newblock \emph{arXiv preprint arXiv:2007.01434}, 2020.

\bibitem[Wang et~al.(2021{\natexlab{b}})Wang, Miao, Zhen, and
  Qiu]{wang2021learning}
Ze~Wang, Zichen Miao, Xiantong Zhen, and Qiang Qiu.
\newblock Learning to learn dense gaussian processes for few-shot learning.
\newblock In \emph{Advances in Neural Information Processing Systems},
  2021{\natexlab{b}}.

\bibitem[Nguyen and Grover(2022)]{nguyen2022transformer}
Tung Nguyen and Aditya Grover.
\newblock Transformer neural processes: Uncertainty-aware meta learning via
  sequence modeling.
\newblock \emph{arXiv preprint arXiv:2207.04179}, 2022.

\bibitem[Upadhyay et~al.(2022)Upadhyay, Chhipa, Phlypo, Saini, and
  Liwicki]{upadhyay2022multi}
Richa Upadhyay, Prakash~Chandra Chhipa, Ronald Phlypo, Rajkumar Saini, and
  Marcus Liwicki.
\newblock Multi-task meta learning: learn how to adapt to unseen tasks.
\newblock \emph{arXiv preprint arXiv:2210.06989}, 2022.

\bibitem[Zhang et~al.(2022{\natexlab{b}})Zhang, Liao, Liu, Xu, and
  Zheng]{zhang2022leaving}
Qianqian Zhang, Xinru Liao, Quan Liu, Jian Xu, and Bo~Zheng.
\newblock Leaving no one behind: A multi-scenario multi-task meta learning
  approach for advertiser modeling.
\newblock In \emph{Proceedings of the Fifteenth ACM International Conference on
  Web Search and Data Mining}, 2022{\natexlab{b}}.

\bibitem[Cao et~al.(2021)Cao, You, and Leskovec]{cao2021relational}
Kaidi Cao, Jiaxuan You, and Jure Leskovec.
\newblock Relational multi-task learning: Modeling relations between data and
  tasks.
\newblock In \emph{International Conference on Learning Representations}, 2021.

\bibitem[Zhang et~al.(2021{\natexlab{b}})Zhang, Qian, Fang, and
  Xu]{zhang2021multi-modal}
Huaiwen Zhang, Shengsheng Qian, Quan Fang, and Changsheng Xu.
\newblock Multi-modal meta multi-task learning for social media rumor
  detection.
\newblock \emph{IEEE Transactions on Multimedia}, 24:\penalty0 1449--1459,
  2021{\natexlab{b}}.

\bibitem[Han et~al.(2022)Han, Yao, Zhao, Li, Shi, Wu, Chen, Qu, Zhao, Lan,
  et~al.]{han2022meta}
Chu Han, Huasheng Yao, Bingchao Zhao, Zhenhui Li, Zhenwei Shi, Lei Wu, Xin
  Chen, Jinrong Qu, Ke~Zhao, Rushi Lan, et~al.
\newblock Meta multi-task nuclei segmentation with fewer training samples.
\newblock \emph{Medical Image Analysis}, 2022.

\bibitem[Chen and Yeh(2021)]{chen2021mmtl}
Guan-Yuan Chen and Ya-Fen Yeh.
\newblock Mmtl: The meta multi-task learning for aspect category sentiment
  analysis.
\newblock In \emph{Proceedings of the 33rd Conference on Computational
  Linguistics and Speech Processing}, 2021.

\bibitem[Achille et~al.(2019)Achille, Lam, Tewari, Ravichandran, Maji, Fowlkes,
  Soatto, and Perona]{achille2019task2vec}
Alessandro Achille, Michael Lam, Rahul Tewari, Avinash Ravichandran, Subhransu
  Maji, Charless~C Fowlkes, Stefano Soatto, and Pietro Perona.
\newblock Task2vec: Task embedding for meta-learning.
\newblock In \emph{Proceedings of the IEEE/CVF international conference on
  computer vision}, 2019.

\bibitem[Nguyen et~al.(2021)Nguyen, Do, and Carneiro]{nguyen2021probabilistic}
Cuong~C Nguyen, Thanh-Toan Do, and Gustavo Carneiro.
\newblock Probabilistic task modelling for meta-learning.
\newblock In \emph{Uncertainty in Artificial Intelligence}, 2021.

\bibitem[Kingma and Welling(2013)]{kingma2013auto}
Diederik~P Kingma and Max Welling.
\newblock Auto-encoding variational bayes.
\newblock \emph{arXiv preprint arXiv:1312.6114}, 2013.

\bibitem[Kingma et~al.(2015)Kingma, Salimans, and
  Welling]{kingma2015variational}
Durk~P Kingma, Tim Salimans, and Max Welling.
\newblock Variational dropout and the local reparameterization trick.
\newblock In \emph{Advances in neural information processing systems}, 2015.

\bibitem[Simonyan and Zisserman(2014)]{simonyan2014very}
Karen Simonyan and Andrew Zisserman.
\newblock Very deep convolutional networks for large-scale image recognition.
\newblock \emph{arXiv preprint arXiv:1409.1556}, 2014.

\bibitem[He et~al.(2016)He, Zhang, Ren, and Sun]{he2016deep}
Kaiming He, Xiangyu Zhang, Shaoqing Ren, and Jian Sun.
\newblock Deep residual learning for image recognition.
\newblock In \emph{Proceedings of the IEEE conference on computer vision and
  pattern recognition}, 2016.

\bibitem[Kingma and Ba(2014)]{kingma2014adam}
Diederik~P Kingma and Jimmy Ba.
\newblock Adam: A method for stochastic optimization.
\newblock \emph{arXiv preprint arXiv:1412.6980}, 2014.

\bibitem[Yao et~al.(2021)Yao, Zhang, and Finn]{yao2021meta}
Huaxiu Yao, Linjun Zhang, and Chelsea Finn.
\newblock Meta-learning with fewer tasks through task interpolation.
\newblock \emph{arXiv preprint arXiv:2106.02695}, 2021.

\bibitem[Saenko et~al.(2010)Saenko, Kulis, Fritz, and
  Darrell]{saenko2010adapting}
Kate Saenko, Brian Kulis, Mario Fritz, and Trevor Darrell.
\newblock Adapting visual category models to new domains.
\newblock In \emph{European Conference on Computer Vision}, 2010.

\bibitem[Kulis et~al.(2011)Kulis, Saenko, and Darrell]{kulis2011you}
Brian Kulis, Kate Saenko, and Trevor Darrell.
\newblock What you saw is not what you get: Domain adaptation using asymmetric
  kernel transforms.
\newblock In \emph{CVPR 2011}. IEEE, 2011.

\bibitem[LeCun et~al.(1998)LeCun, Bottou, Bengio, and
  Haffner]{lecun1998gradient}
Yann LeCun, L{\'e}on Bottou, Yoshua Bengio, and Patrick Haffner.
\newblock Gradient-based learning applied to document recognition.
\newblock \emph{Proceedings of the IEEE}, 86\penalty0 (11):\penalty0
  2278--2324, 1998.

\bibitem[Gong et~al.(2012)Gong, Shi, Sha, and Grauman]{gong2012geodesic}
Boqing Gong, Yuan Shi, Fei Sha, and Kristen Grauman.
\newblock Geodesic flow kernel for unsupervised domain adaptation.
\newblock In \emph{IEEE Conference on Computer Vision and Pattern Recognition},
  2012.

\bibitem[Griffin et~al.(2007)Griffin, Holub, and Perona]{griffin2007caltech}
Gregory Griffin, Alex Holub, and Pietro Perona.
\newblock Caltech-256 object category dataset.
\newblock \emph{Dataset Report}, 2007.

\bibitem[Buda et~al.(2019)Buda, Saha, and Mazurowski]{buda2019association}
Mateusz Buda, Ashirbani Saha, and Maciej~A Mazurowski.
\newblock Association of genomic subtypes of lower-grade gliomas with shape
  features automatically extracted by a deep learning algorithm.
\newblock \emph{Computers in biology and medicine}, 109:\penalty0 218--225,
  2019.

\bibitem[Ronneberger et~al.(2015)Ronneberger, Fischer, and
  Brox]{ronneberger2015u}
Olaf Ronneberger, Philipp Fischer, and Thomas Brox.
\newblock U-net: Convolutional networks for biomedical image segmentation.
\newblock In \emph{International Conference on Medical Image Computing and
  Computer-Assisted Intervention}, 2015.

\end{thebibliography}
\bibliographystyle{unsrtnat}
}
\newpage
\appendix

\tableofcontents

\section{Frequently Asked Questions}
In this section, we list frequently asked questions from researchers who help proofread this manuscript. These raised questions might also be relevant for others and help in better understanding the paper, so we include more detailed discussions here.

\paragraph{Connections between different settings.}
This work considers the multi-input multi-output setting of multi-task learning under the episodic training mechanism.

As shown in Table~\ref{tab: connection}, we use "Heterogeneous tasks" to distinguish the different branches of multi-task learning: (1) \textit{single-input multi-output}~(SIMO) considers different tasks which have the same input and different supervision information. 
(2) \textit{multi-input multi-output}~(MIMO) considers heterogeneous tasks, which have different inputs and follow different data distributions. All tasks are related since they share the target space. This setting encourages deep models to deal with the insufficient data of each task by aggregating the training data from related tasks in the spirit of data augmentation.

Meanwhile, "Episodic training" is used to describe the data-feeding strategy. 
\textit{Multi-task meta-learning} also benefits from episodic training, but it follows the SIMO setting in every single episode and cannot sufficiently handle heterogeneous tasks. In our work, \textit{episodic multi-task learning} is designed based on the MIMO setting, suffering from distribution shifts between heterogeneous tasks.
In addition, we note that \textit{conventional meta-learning} follows the "Episodic training" mechanism but focuses on single-task learning in each episode. Thus, "Heterogeneous tasks" is not available here (-).
More details are left in Table (\ref{tab: connection}).

\begin{table}[ht]
\caption{\textbf{Connections between different settings.} The symbols \ymark~and~\xmark~ indicate whether or not the specific setting has the corresponding characteristic.
}
\centering
\begin{adjustbox}{max width =1.0\linewidth}
\begin{tabular}{l|c|c|c}
\toprule
Settings & Methods & Episodic training & Heterogeneous tasks\\ 
\midrule
\textit{single-input multi-output}~(SIMO) & ~\cite{kendall2018multi, liu2019end, misra2016cross, liu2020towards, sener2018multi, zamir2018taskonomy, bhattacharjee2022mult} & \xmark & \xmark \\ 
\textit{multi-input multi-output}~(MIMO) &~\cite{shen2021variational, long2017learning, zhang2021multi, shen2022association, liang2022scheduled, zhang2022learning} & \xmark & \ymark \\ 
\textit{conventional meta-learning} &~\cite{vinyals2016matching, finn2017model, snell2017prototypical, requeima2019fast, garnelo2018neural, garnelo2018conditional} & \ymark & - \\ 
\textit{multi-task meta-learning}&~\cite{upadhyay2022multi, zhang2022leaving, cao2021relational,  kim2021multi, zhang2021multi-modal, han2022meta, chen2021mmtl}  & \ymark & \xmark \\
\rowcolor{LightGray}
\textit{episodic multi-task learning} & This paper & \ymark & \ymark \\
\bottomrule
\end{tabular}
\end{adjustbox}
\label{tab: connection}
\vspace{-4mm}
\end{table}

\paragraph{Problem scope.}
In episodic multi-task learning, we restrict the scope of the problem to the case where tasks in the same episode are related and share the same target space.
There are two main reasons: (1) we follow the MIMO setting of multi-task learning in every single episode, where the same target space assures the existence of the knowledge shared among tasks.
(2) As demonstrated in recent works~\cite{achille2019task2vec, nguyen2021probabilistic}, meta-learning tasks generated from the same categories or taxonomic clusters are closer. 
This also implies that tasks with the same target space are related.

\paragraph{Differences from other episodic single-task setups.}

Based on episodic training, there are several approaches related to the setting of episodic multi-task learning: (1)~\textit{cross-domain few-shot learning} addresses few-shot learning under domain shifts~\cite{tseng2020cross}. Several models~\cite{chen2019closer, tseng2020cross, guo2020broader, du2020metanorm, das2022confess, li2022cross} train a model on a single source domain or several source domains and then generalize it to other domains. In contrast, our research emphasizes the domain shifts within individual episodes rather than among them.
(2)~\textit{multimodal meta-learning} extends few-shot learning from a single input-label domain to multiple different input-label domains~\cite{vuorio2019multimodal}. These methods~\cite{vuorio2018toward, vuorio2019multimodal, triantafillou2019meta, yao2019hierarchically, abdollahzadeh2021revisit, chen2021hetmaml} design a meta-learner that can handle tasks from distinct distributions in sequence. Our work centers on simultaneously dealing with several related tasks within a meta-training or meta-test episode.
(3)~\textit{cross-modality few-shot learning}~\cite{xing2019adaptive, pahde2018cross, pahde2021multimodal} leverages semantic information (e.g., word vectors) to augment the performance of visual tasks and not among visual tasks only. 
The aforementioned approaches exclusively address single-task learning per episode, while our work concurrently tackles multiple heterogeneous and related tasks within each meta-training or meta-test episode. Intuitive comparisons with the approaches are shown in Figure~\ref{fig:MMTL_setting_related}.

\begin{figure*}[ht]
\begin{center}
\includegraphics[width=1.0\textwidth]{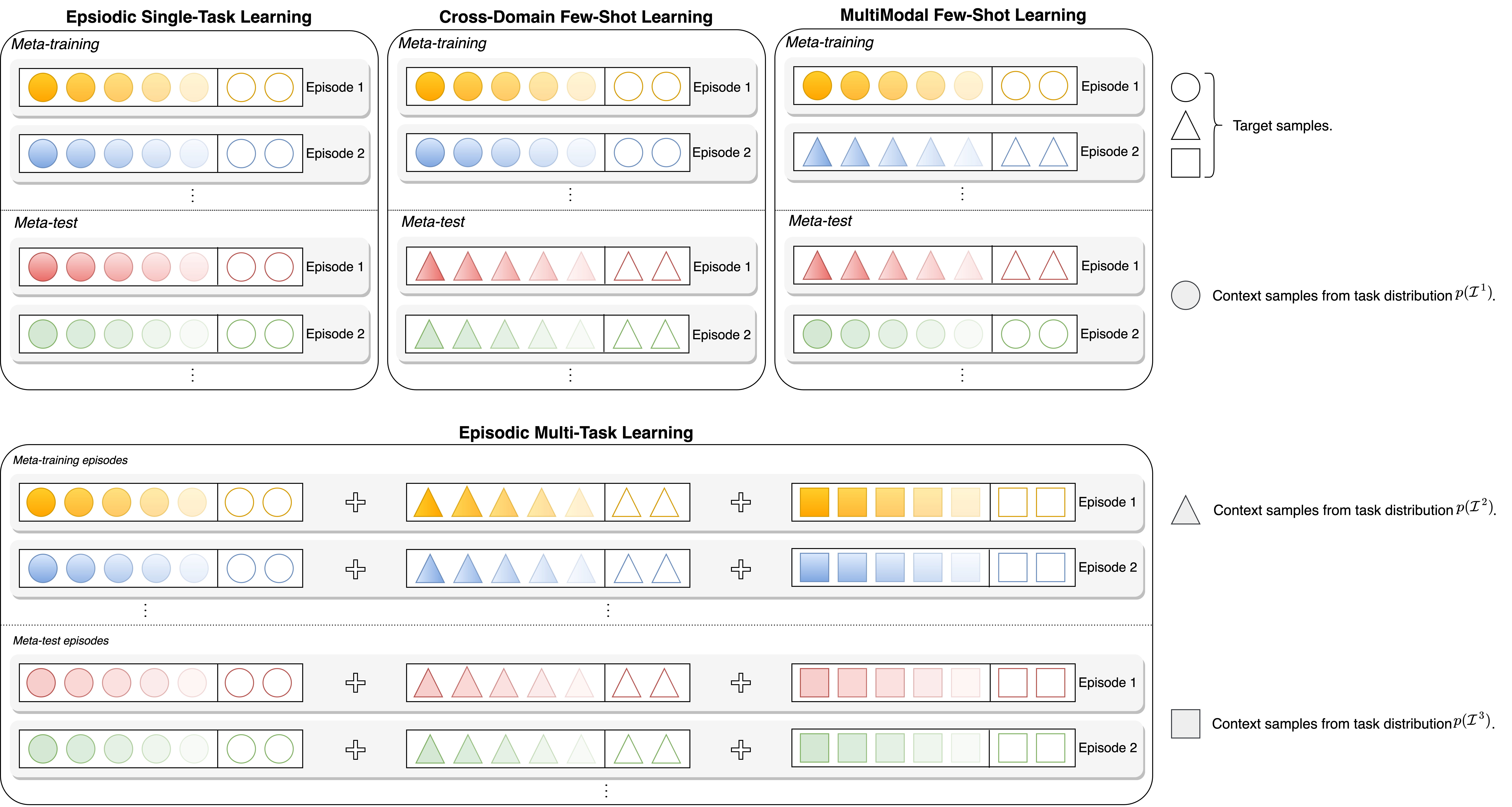}
\end{center}
\caption{\textbf{Differences from other episodic single-task setups}. Each row corresponds to an episode. Different color denotes different categories; the same
color with different shades represents different categories in the same task. Episodic multi-task learning is orthogonal to these setups since it concurrently tackles multiple heterogeneous and related tasks within each episode.}
\label{fig:MMTL_setting_related}
\end{figure*}

\paragraph{Different roles of global and local latent variables.}
In this paper, we introduce global latent representations and local latent parameters within a hierarchical architecture. Each type of them plays a distinct role in the proposed model: (1) Global latent representations provide rich task-specific information during the inference of all local latent parameters. This enables the model to generate task-specific decoders to handle heterogeneous tasks in a single episode effectively. (2) Local latent parameters with prediction-aware information constitute task-specific decoders. Each local latent parameter reveals the knowledge corresponding to a specific prediction across different tasks. This enhances the expressive power of the proposed model. In practice, we observe that global latent representations and local latent parameters complement each other when performing predictions in meta-test episodes. 

\paragraph{Advantages of the proposed hierarchical Bayes framework.}
We summarize the advantages of the proposed framework. (1) A hierarchical Bayesian framework with global and local latent variables yields a richer and more complex latent space to mitigate the expressiveness bottleneck, thus better parameterizing task-specific functions in stochastic processes. (2) Global and local latent variables capture epistemic uncertainty in representation and parameter levels, respectively, and show improved performance in our experiments.

\paragraph{Roles of probabilistic HNPs and KL values in meta training.}
The probabilistic HNPs encode the context as the heterogeneous prior and reveal the uncertainty resulting from data insufficiency and the extent of observations in tasks. Additionally, minimizing KL terms encourages priors inferred from context sets to stay close to posteriors inferred from target sets, guiding more efficient conditional generation.
In training processes, we observed the KL divergence value does not decrease to 0 after convergence, e.g., KL values are in the order of e-1 on Office-home. This is also part of traits in the NPs family, suggesting the approximate posterior and the approximate prior encode different conditional information during the generation of latent variables.

\paragraph{Real-world examples or benchmarks for episodic multi-task learning.} Episodic multi-task learning has several potential applications in the real world, such as autonomous driving and robotic manipulations. In detail, the autonomous driving system needs to deal with different and related sensor data in an environment. However, the driving environment constantly changes along with the weather, time, country, etc. Thus, fast adapting of the current multi-tasker to new environments is challenging in this application and our method can be a plausible solution for this challenge.

\section{Properties of Valid Exchangeable Stochastic Processes}
\label{appendix:proof}

Here, we further demonstrate that HNPs are valid stochastic processes, as meeting the \textit{exchangeability} and \textit{marginalization} consistency conditions \cite{garnelo2018neural}. 
As stated in \cite{garnelo2018neural}: the conditions, including (finite) exchangeability and marginalization, are sufficient to define a stochastic process with the help of the Kolmogorov extension theorem. Here we follow the notations in the main paper. For convenience, we provide the used symbols and the corresponding descriptions in Table~\ref{tab: notaion}.

In this paper, we model the functional posterior distribution of the stochastic process by approximating the joint distribution over all target sets $p(\mathbf{y}^{1:M}_{\tau}| \mathbf{x}^{1:M}_{\tau}, \mathcal{C}^{1:M}_{\tau})$, which is conditioned on all context samples $C^{1:M}_{\tau}$. 
To distinguish the order set among different tasks, we use $n^m_{\tau}$ to denote the number of target samples corresponding to a specific task in the episode $\tau$. For simplicity, we omit the meta-knowledge $\omega$ and $\nu^{1:M}$ in the formulations during the proof. 

\begin{table}[ht]
\caption{Notations and their corresponding descriptions in this paper.}
\centering
\begin{adjustbox}{max width =1.0\linewidth}
\begin{tabular}{c|l}
\toprule
Notation & Description \\
\midrule
$(\cdot)_{\tau}$ & Variables correspond to an episode.  \\
$(\cdot)^{m}$ & Variables correspond to a single task.  \\
\midrule
$\mathcal{I}_{\tau}^{m}$ & A single task in the episode $\tau$, which is sampled from a specific task distribution.\\
$p(\mathcal{I}^m)$ & The specific task distribution.\\
$M$ & The number of task distributions and the number of tasks in a single episode. \\
$\mathcal{I}_{\tau}^{1:M}$ & All heterogeneous tasks in the episode $\tau$. \\
$\mathcal{C}$ & A context set.\\
$\mathcal{T}$ & A target set.\\
${\bar{x}}$ & A context sample feature of the context set. \\
${{x}}$ & A target sample feature of the target set. \\
$\mathbf{x}$ & Set of all target sample features in the target set. \\
${\bar{y}}$ & The ground truth of the context sample. \\
${{y}}$ & The ground truth of the target sample. \\
$\mathbf{y}$ &  Set of the ground truth of all target samples in the target set. \\
$\mathcal{N}_{\mathcal{C}}$ & The numbers of context samples in the context set.\\
$\mathcal{N}_{\mathcal{T}}$ & The numbers of target samples in the target set.\\
$\mathbf{z}^m_{\tau}$ & The introduced global latent representation for a given task.\\
$\mathbf{w}^m_{\tau, 1:O}$ & The introduced local latent parameters for a given task.\\
\bottomrule
\end{tabular}
\end{adjustbox}
\label{tab: notaion}
\end{table}

\subsection{Proof of Exchangeability Consistency}
We now provide the proof of \textit{Exchangeability Consistency}: the joint prediction distribution is invariant to the permutation of the given multiple tasks and the corresponding samples in each task. 

\begin{theorem} (Exchangability)
For finite ${n^*_{\tau}}=\sum^M_{m=1}n^m_{\tau}$, if ${\pi}^*_{\tau} = \{\pi^m_{\tau}\}_{m=1}^{M}$ is a permutation of $\{1, \cdots, {n^*}\}$ where $\pi^m_{\tau}$ is a permutation of the corresponding order set $\{1, \cdots, n^m_{\tau}\}$, then:
\begin{equation*}
p({\pi^*_{\tau}}(\mathbf{y}^{1:M}_{\tau})|{\pi^*_{\tau}}(\mathbf{x}^{1:M}_{\tau}), \mathcal{C}^{1:M}_{\tau})  = p(\mathbf{y}^{1:M}_{\tau} |\mathbf{x}^{1:M}_{\tau}, \mathcal{C}^{1:M}_{\tau}),
\end{equation*}
where ${\pi^*_{\tau}}(\mathbf{y}^{1:M}_{\tau}) := (\pi^1_{\tau}(\mathbf{y}^1_{\tau}), \cdots, \pi^M_{\tau}(\mathbf{y}^M_{\tau})) = ({y}_{\tau, \pi^*_{\tau}(1)}, \cdots, {y}_{\tau, \pi^*_{\tau}({n^*_{\tau}})})$ and ${\pi^*_{\tau}}(\{\mathbf{x}^{1:M}_{\tau}\}) := (\pi^1_{\tau}(\mathbf{x}^1_{\tau}), \cdots, \pi^M_{\tau}(\mathbf{x}^M_{\tau})) = ({x}_{\tau, {\pi^*_{\tau}}(1)}, \cdots, {x}_{\tau, {\pi^*_{\tau}}({n^{*}_{\tau}})})$.
\end{theorem}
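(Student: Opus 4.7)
The plan is to reduce exchangeability to two symmetries already baked into the HNP construction in Eq.~(\ref{eq: HNP_model}): (i) the outer product $\prod_{m=1}^{M}$ over tasks, and (ii) the per-task factorization of the target-set likelihood. I would begin by writing out the joint predictive distribution $p(\mathbf{y}^{1:M}_\tau \mid \mathbf{x}^{1:M}_\tau, \mathcal{C}^{1:M}_\tau)$ as given in Eq.~(\ref{eq: HNP_model}), and then focus on a single task $m$ since the outer product is trivially invariant under any reordering of tasks.

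Within task $m$, the integrand $p(\mathbf{y}^m_\tau \mid \mathbf{x}^m_\tau, \mathbf{w}^m_{\tau,1:O})$ is obtained (as stated in the paragraph after Eq.~(\ref{eq: HNP_model})) by matrix-multiplying each target feature with the local parameters and feeding each result through an independent per-sample output layer. Hence it decomposes as $\prod_{j=1}^{n^m_\tau} p(y^m_{\tau,j} \mid x^m_{\tau,j}, \mathbf{w}^m_{\tau,1:O})$, and is therefore invariant under any permutation $\pi^m_\tau$ of the within-task indices $\{1,\dots,n^m_\tau\}$. Crucially, the priors $p(\mathbf{z}^m_\tau \mid \mathcal{C}^m_\tau; \nu^m)$ and $p(\mathbf{w}^m_{\tau,1:O} \mid \mathbf{z}^m_\tau, \mathcal{C}^{1:M}_\tau; \omega)$ do not depend on the ordering of the target set at all: they are functions of the context sets and the meta-knowledge only. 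Hence the inner double integral over $\mathbf{z}^m_\tau$ and $\mathbf{w}^m_{\tau,1:O}$ is unchanged by the within-task permutation, and multiplying these invariant per-task contributions over $m$ delivers $p({\pi^*_\tau}(\mathbf{y}^{1:M}_\tau)\mid{\pi^*_\tau}(\mathbf{x}^{1:M}_\tau),\mathcal{C}^{1:M}_\tau)=p(\mathbf{y}^{1:M}_\tau\mid\mathbf{x}^{1:M}_\tau,\mathcal{C}^{1:M}_\tau)$.

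The subtlety, and the step I expect to be the main obstacle, is to justify rigorously that the context-level priors are genuinely well-defined permutation-invariant functions of the \emph{sets} $\mathcal{C}^m_\tau$ and $\mathcal{C}^{1:M}_\tau$, rather than of any particular ordering. For this I would appeal to the structure of the transformer-based inference modules in Eqs.~(\ref{eq: z_1})--(\ref{eq: w_3}): they are stacks of multi-headed self-attention, LayerNorm, MLPs, and residual connections, each of which is permutation-equivariant in its token inputs (no positional encoding is used). A set-valued readout is then obtained by extracting only the refined learnable tokens $\widehat{\nu}^m$ and $\widehat{\omega}_o$, which yields a permutation-invariant functional of the unordered context features. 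Establishing that equivariance propagates through the full stack is routine but must be stated carefully; once it is in hand, the joint prior factor in Eq.~(\ref{eq: HNP_model}) depends only on the unordered sets, and the exchangeability conclusion follows directly from the product structure. The same observation also yields invariance under any inter-task permutation, should that be needed.
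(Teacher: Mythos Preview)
Your core argument in the second paragraph is correct and is exactly what the paper does: expand the joint via Eq.~(\ref{eq: HNP_model}), factorize the likelihood as $\prod_{j=1}^{n^m_\tau} p(y^m_{\tau,j}\mid x^m_{\tau,j},\mathbf{w}^m_{\tau,1:O})$, note that the priors on $\mathbf{z}^m_\tau$ and $\mathbf{w}^m_{\tau,1:O}$ depend only on the (fixed) context sets, and conclude by the elementary fact that a product over a permuted index set equals the product over the original index set.

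Your third paragraph, however, is unnecessary for this theorem and reflects a slight misreading of what must be shown. The permutation $\pi^*_\tau$ acts only on the \emph{target} indices; the context sets $\mathcal{C}^{1:M}_\tau$ are held fixed throughout the statement. Hence you do not need to argue that the transformer modules in Eqs.~(\ref{eq: z_1})--(\ref{eq: w_3}) are permutation-invariant functionals of the context sets: whatever ordering of $\mathcal{C}^m_\tau$ was used to define the prior, that same ordering is used on both sides of the claimed identity, so the prior factors cancel identically. The paper's proof accordingly never touches the internal structure of the inference modules. Your observation about self-attention equivariance is correct and would be relevant for a stronger statement (exchangeability in the context points as well), but it is not the ``main obstacle'' here---there is none beyond the product factorization you already identified.
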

\begin{proof}
\begin{small}
\begin{equation*}
\begin{aligned}
&p({\pi^*_{\tau}}(\mathbf{y}^{1:M}_{\tau})|{\pi^*_{\tau}}(\mathbf{x}^{1:M}_{\tau}), \mathcal{C}^{1:M}_{\tau}) \\
& = \int \int p( {\pi^*_{\tau}}(\mathbf{y}^{1:M}_{\tau})|{\pi^*_{\tau}}(\mathbf{x}^{1:M}_{\tau}), \mathbf{w}^{1:M}_{\tau, 1:O}) \Big (\prod_{m=1}^{M} p(\mathbf{w}^m_{\tau, 1:O} | \mathbf{z}^{m}_{\tau}, \mathcal{C}^{1:M}_{\tau}) \Big ) \Big(\prod_{m=1}^{M} p(\mathbf{z}^m_{\tau}|\mathcal{C}^m_{\tau}) \Big) d\mathbf{w}^{1:M}_{\tau, 1:O} d\mathbf{z}^{1:M}_{\tau} \\
& = \int \int \Big( \prod_{i=1}^{{n^*_{\tau}}} p( {y}_{\tau, {\pi^*_{\tau}}(i)} | {x}_{\tau, {\pi^*_{\tau}}(i)}, \mathbf{w}^{1:M}_{\tau, 1:O})\Big) \Big(\prod_{m=1}^{M} p(\mathbf{w}^m_{\tau, 1:O} | \mathbf{z}^{m}_{\tau}, \mathcal{C}^{1:M}_{\tau})\Big) \Big(\prod_{m=1}^{M} p(\mathbf{z}^m_{\tau}|\mathcal{C}^m_{\tau}) \Big) d\mathbf{w}^{1:M}_{\tau, 1:O} d\mathbf{z}^{1:M}_{\tau} \\
& = \int \int \prod_{m=1}^{M} \Big \{ \prod_{j=1}^{n^m_{\tau}} p({y}^m_{\tau, {\pi^m_{\tau}}(j)} | {x}^m_{\tau, {\pi^m_{\tau}}(j)}, \mathbf{w}^{m}_{\tau, 1:O}) p(\mathbf{w}^m_{\tau, 1:O} | \mathbf{z}^{m}_{\tau}, \mathcal{C}^{1:M}_{\tau})  p(\mathbf{z}^m_{\tau}|\mathcal{C}^m_{\tau}) \Big \} d\mathbf{w}^{1:M}_{\tau, 1:O} d\mathbf{z}^{1:M}_{\tau} \\
& = \int \int \prod_{m=1}^{M} \Big \{ \prod_{j=1}^{n^m_{\tau}} p({y}^m_{\tau, j} | {x}^m_{\tau, j}, \mathbf{w}^{m}_{\tau, 1:O}) p(\mathbf{w}^m_{\tau, 1:O} | \mathbf{z}^{m}_{\tau}, \mathcal{C}^{1:M}_{\tau})  p(\mathbf{z}^m_{\tau}|\mathcal{C}^m_{\tau}) \Big \} d\mathbf{w}^{1:M}_{\tau, 1:O} d\mathbf{z}^{1:M}_{\tau} \\
& = p(\mathbf{y}^{1:M}_{\tau} |\mathbf{x}^{1:M}_{\tau}, \mathcal{C}^{1:M}_{\tau}). \\
\end{aligned}
\end{equation*}
\end{small}
\end{proof}

\subsection{Proof of Marginalization Consistency}
We now aim to prove the \textit{Marginalization Consistency} of the proposed model: if marginalizing out a part of the target set in each task, the marginal distribution remains the same as defined on the original target sets without the marginalized part. 

\begin{theorem}(Marginalization)
Given ${\hat{n}^*_{\tau}} = \sum_{m=1}^{M} \hat{n}^m_{\tau}$, where $ 1 \le {\hat{n}^*_{\tau}} \le {n^*_{\tau}}$ and for each task $ 1\le \hat{n}^m_{\tau} \le {n}^m_{\tau}$, the consistency is:
\begin{equation*}
\int p(\mathbf{y}^{1:M}_{\tau} |\mathbf{x}^{1:M}_{\tau}, \mathcal{C}^{1:M}_{\tau}) d (\mathbf{y}^{1:M}_{\tau})_{{\hat{n}^*_{\tau}}+1:{{n}^*_{\tau}}}  = p((\mathbf{y}^{1:M}_{\tau})_{1:{\hat{n}^*_{\tau}}}|(\mathbf{x}^{1:M}_{\tau})_{1:{\hat{n}^*_{\tau}}}, \mathcal{C}^{1:M}_{\tau}),
\end{equation*}
where $(\mathbf{y}^{1:M}_{\tau})_{1:{\hat{n}^*_{\tau}}} = ((\mathbf{y}^{1}_{\tau})_{1:{\hat{n}^1_{\tau}}},
\cdots, (\mathbf{y}^{M}_{\tau})_{1:{\hat{n}^M_{\tau}}}) = (y_{\tau, 1}, \cdots, y_{\tau, \hat{n}^*_{\tau}})$ and $(\mathbf{x}^{1:M}_{\tau})_{1:{\hat{n}^*_{\tau}}} = ((\mathbf{x}^{1}_{\tau})_{1:{\hat{n}^1_{\tau}}},
\cdots, (\mathbf{x}^{M}_{\tau})_{1:{\hat{n}^M_{\tau}}})= (x_{\tau, 1}, \cdots, x_{\tau, \hat{n}^*_{\tau}})$.
\end{theorem}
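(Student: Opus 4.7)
The plan is to reduce the statement to the factorized form of the joint distribution already obtained in the exchangeability proof, and then integrate out the excess targets one likelihood factor at a time. Concretely, I would begin by writing
\begin{equation*}
p(\mathbf{y}^{1:M}_{\tau} |\mathbf{x}^{1:M}_{\tau}, \mathcal{C}^{1:M}_{\tau})  = \int\!\!\int \prod_{m=1}^{M} \Big\{ \prod_{j=1}^{n^m_{\tau}} p(y^m_{\tau, j} | x^m_{\tau, j}, \mathbf{w}^m_{\tau, 1:O}) \, p(\mathbf{w}^m_{\tau, 1:O} | \mathbf{z}^m_{\tau}, \mathcal{C}^{1:M}_{\tau}) \, p(\mathbf{z}^m_{\tau}|\mathcal{C}^m_{\tau}) \Big\} d\mathbf{w}^{1:M}_{\tau, 1:O}\, d\mathbf{z}^{1:M}_{\tau},
\end{equation*}
which is the HNP generative decomposition that was already established when proving exchangeability. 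The crucial observation is that, conditioned on the local latent parameters $\mathbf{w}^m_{\tau,1:O}$, the target outputs $y^m_{\tau,j}$ are mutually independent across both $m$ and $j$, so the likelihood splits into a single-sample product.

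Next, I would apply Fubini's theorem to push the outer integral $\int(\cdot)\, d(\mathbf{y}^{1:M}_{\tau})_{\hat{n}^*_\tau+1:n^*_\tau}$ inside the integrals over $\mathbf{w}$ and $\mathbf{z}$. For each fixed $m$, the factors indexed by $j > \hat{n}^m_\tau$ depend on the variable being integrated only through $p(y^m_{\tau, j} | x^m_{\tau, j}, \mathbf{w}^m_{\tau, 1:O})$, which by construction is a normalized density in $y^m_{\tau,j}$. Hence
\begin{equation*}
\int \prod_{j=\hat{n}^m_\tau+1}^{n^m_\tau} p(y^m_{\tau, j} | x^m_{\tau, j}, \mathbf{w}^m_{\tau, 1:O}) \, dy^m_{\tau,\hat{n}^m_\tau+1}\cdots dy^m_{\tau,n^m_\tau} = 1
\end{equation*}
for every $m$, and the marginalized integrand collapses to the product over the retained indices $j = 1, \ldots, \hat{n}^m_\tau$ only.

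Re-assembling the remaining factors, the resulting expression reads
\begin{equation*}
\int\!\!\int \prod_{m=1}^{M} \Big\{ \prod_{j=1}^{\hat{n}^m_{\tau}} p(y^m_{\tau, j} | x^m_{\tau, j}, \mathbf{w}^m_{\tau, 1:O}) \, p(\mathbf{w}^m_{\tau, 1:O} | \mathbf{z}^m_{\tau}, \mathcal{C}^{1:M}_{\tau}) \, p(\mathbf{z}^m_{\tau}|\mathcal{C}^m_{\tau}) \Big\} d\mathbf{w}^{1:M}_{\tau, 1:O}\, d\mathbf{z}^{1:M}_{\tau},
\end{equation*}
which is precisely the HNP generative decomposition of $p((\mathbf{y}^{1:M}_{\tau})_{1:\hat{n}^*_\tau} | (\mathbf{x}^{1:M}_{\tau})_{1:\hat{n}^*_\tau}, \mathcal{C}^{1:M}_{\tau})$, completing the argument. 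The main (and essentially only) subtlety is justifying the interchange of integrals via Fubini, which follows because all factors are nonnegative and the iterated integral is finite (it equals $1$ after integrating out all targets); once this is granted, the collapse of the likelihood factors is immediate from the normalization of each conditional density $p(y | x, \mathbf{w})$. No additional properties of the transformer-structured inference modules or of the priors on $\mathbf{z}^m_\tau$ and $\mathbf{w}^m_{\tau,1:O}$ are needed, since those factors are left completely untouched by the marginalization.
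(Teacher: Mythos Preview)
Your proposal is correct and follows essentially the same approach as the paper: expand the joint predictive into the HNP hierarchical generative form, use conditional independence of the $y^m_{\tau,j}$ given $\mathbf{w}^m_{\tau,1:O}$ to factor the likelihood, integrate out the excess factors (each integrating to $1$), and recognize the remaining expression as the marginal on the retained indices. Your explicit invocation of Fubini/Tonelli and the normalization argument is slightly more careful than the paper's presentation, but the logical skeleton is identical.
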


\begin{proof}
\begin{small}
\begin{equation*}
\begin{aligned}
& \int p(\mathbf{y}^{1:M}_{\tau} |\mathbf{x}^{1:M}_{\tau}, \mathcal{C}^{1:M}_{\tau}) d (\mathbf{y}^{1:M}_{\tau})_{{\hat{n}^*_{\tau}}+1:{{n}^*_{\tau}}} \\
& =\int \int \int p(\mathbf{y}^{1:M}_{\tau}|\mathbf{x}^{1:M}_{\tau}, \mathbf{w}^{1:M}_{\tau, 1:O}) \Big(\prod_{m=1}^{M} p(\mathbf{w}^m_{\tau, 1:O} | \mathbf{z}^{m}_{\tau}, \mathcal{C}^{1:M}_{\tau})\Big) \Big(\prod_{m=1}^{M} p(\mathbf{z}^m_{\tau}|\mathcal{C}^m_{\tau})\Big) d\mathbf{w}^{1:M}_{\tau, 1:O} d\mathbf{z}^{1:M}_{\tau} d (\mathbf{y}^{1:M}_{\tau})_{{\hat{n}^*_{\tau}}+1:{{n}^*_{\tau}}}\\
& =\int \int \int \Big( \prod_{i=1}^{n^{*}_{\tau}} p(y_{\tau, i}| x_{\tau, i}, \mathbf{w}^{1:M}_{\tau, 1:O}) \Big) \Big(\prod_{m=1}^{M} p(\mathbf{w}^m_{\tau, 1:O} | \mathbf{z}^{m}_{\tau}, \mathcal{C}^{1:M}_{\tau})\Big) \Big(\prod_{m=1}^{M} p(\mathbf{z}^m_{\tau}|\mathcal{C}^m_{\tau})\Big) d\mathbf{w}^{1:M}_{\tau, 1:O} d\mathbf{z}^{1:M}_{\tau} d (\mathbf{y}^{1:M}_{\tau})_{{\hat{n}^*_{\tau}}+1:{{n}^*_{\tau}}}\\
& =\int \int \prod_{m=1}^{M} \Big \{ \int \prod_{j=1}^{n_{\tau}^{m}} p(y^m_{\tau, j}| x^m_{\tau, j}, \mathbf{w}^{m}_{\tau, 1:O}) p(\mathbf{w}^m_{\tau, 1:O} | \mathbf{z}^{m}_{\tau}, \mathcal{C}^{1:M}_{\tau}) p(\mathbf{z}^m_{\tau}|\mathcal{C}^m_{\tau}) d (\mathbf{y}^{m}_{\tau})_{{\hat{n}^m_{\tau}}+1:{{n}^m_{\tau}}}  \Big\} d\mathbf{w}^{1:M}_{\tau, 1:O} d\mathbf{z}^{1:M}_{\tau}\\
& =\int \int \prod_{m=1}^{M} \Big \{ \prod_{j=1}^{\hat{n}_{\tau}^{m}} p(y^m_{\tau, j}| x^m_{\tau, j}, \mathbf{w}^{m}_{\tau, 1:O}) p(\mathbf{w}^m_{\tau, 1:O} | \mathbf{z}^{m}_{\tau}, \mathcal{C}^{1:M}_{\tau}) p(\mathbf{z}^m_{\tau}|\mathcal{C}^m_{\tau}) \\
& ~~~~~~~~~~~~~~~~~~~~~~~~~~~~~~~~~~\int \prod_{j=\hat{n}_{\tau}^{m} +1}^{n_{\tau}^{m}} p(y^m_{\tau, j}| x^m_{\tau, j}, \mathbf{w}^{m}_{\tau, 1:O}) d (\mathbf{y}^{m}_{\tau})_{{\hat{n}^m_{\tau}}+1:{{n}^m_{\tau}}} \Big\} d\mathbf{w}^{1:M}_{\tau, 1:O} d\mathbf{z}^{1:M}_{\tau}\\
& =\int \int \prod_{m=1}^{M} \Big \{ \prod_{j=1}^{\hat{n}_{\tau}^{m}} p(y^m_{\tau, j}| x^m_{\tau, j}, \mathbf{w}^{m}_{\tau, 1:O}) p(\mathbf{w}^m_{\tau, 1:O} | \mathbf{z}^{m}_{\tau}, \mathcal{C}^{1:M}_{\tau}) p(\mathbf{z}^m_{\tau}|\mathcal{C}^m_{\tau}) \Big\} d\mathbf{w}^{1:M}_{\tau, 1:O} d\mathbf{z}^{1:M}_{\tau}\\
& = p((\mathbf{y}^{1:M}_{\tau})_{1:{\hat{n}^*_{\tau}}}|(\mathbf{x}^{1:M}_{\tau})_{1:{\hat{n}^*_{\tau}}}, \mathcal{C}^{1:M}_{\tau}).
\end{aligned}
\end{equation*}
\end{small}
\end{proof}

\section{Tractable and Scalable Optimization}
\label{sec: appendix/optimization}

For the proposed HNPs, it is intractable to obtain the true joint posterior $p(\mathbf{w}_{\tau, 1:O}^{1:M}, \mathbf{z}_{\tau}^{1:M}|\mathcal{T}_{\tau}^{1:M}; \omega, \nu^{1:M})$ for each episode. Thus, we employ variational inference to optimize the designed model by approximating the true joint posterior in each episode. To do so, we introduce the variational joint posterior distribution:
\begin{equation}
\begin{aligned}
& q_{\theta, \phi}(\mathbf{w}_{\tau, 1:O}^{1:M}, \mathbf{z}_{\tau}^{1:M}|\mathcal{T}_{\tau}^{1:M}; \omega, \nu^{1:M}) = \prod_{m=1}^M q_{\theta}(\mathbf{z}_{\tau}^m|\mathcal{T}_{\tau}^{m}; \nu^m) q_{\phi}( \mathbf{w}^m_{\tau, 1:O} | \mathbf{z}_{\tau}^m, \mathcal{T}_{\tau}^{1:M}; \omega), \\
\end{aligned}
\label{eq: variational_posteriors}
\end{equation}
where $q_{\theta}(\mathbf{z}_{\tau}^m|\mathcal{T}_{\tau}^{m}; \nu^m)$ and $q_{\phi}(\mathbf{w}^m_{\tau, 1:O} | \mathbf{z}_{\tau}^m, \mathcal{T}_{\tau}^{1:M}; \omega)$ are variational posteriors of their corresponding latent variables. We note that variational posteriors are inferred from the target sets that are available in the meta-training stage. The variational posteriors are parameterized as diagonal Gaussian distributions~\cite{kingma2013auto}. The inference networks $\theta$ and $\phi$ are shared by the variational posteriors and their corresponding priors, following the protocol of vanilla NPs~\cite{garnelo2018neural}.

\subsection{Derivation of Approximate ELBO for HNPs}
By incorporating the variational posteriors in Eq.~(\ref{eq: variational_posteriors}) into the modeling of HNPs in the main paper, we can derive the approximate ELBO $L_{\rm{HNPs}}(\omega, \nu^{1:M}, \theta, \phi)$ as follows:
\begin{small}
\begin{equation}
\begin{aligned}
& \log p(\mathcal{T}_{\tau}^{1:M} |  \mathcal{C}_{\tau}^{1:M};  \omega, \nu^{1:M}) \\
& = \sum_{m=1}^{M} \log p(\mathcal{T}_{\tau}^{m} |  \mathcal{C}_{\tau}^{1:M};  \omega, \nu^{m}) \\
& = \sum_{m=1}^{M} \bigg \{ \log  \int  \Big \{ \int p(\mathbf{y}_{\tau}^{m}|\mathbf{x}_{\tau}^{m}, \mathbf{w}_{\tau, 1:O}^m) p_{\phi}( \mathbf{w}_{\tau, 1:O}^m | \mathbf{z}_{\tau}^m, \mathcal{C}_{\tau}^{1:M}; \omega) d \mathbf{w}^m_{\tau, 1:O}   \Big \} p_{\theta}(\mathbf{z}_{\tau}^m|\mathcal{C}_{\tau}^{m}; \nu^m) d\textbf{z}_{\tau}^m \bigg \} \\
& =  \sum_{m=1}^{M} \bigg \{  \log \int \Big\{ \int p(\mathbf{y}_{\tau}^{m}|\mathbf{x}_{\tau}^{m}, \mathbf{w}_{\tau, 1:O}^m) p_{\phi}( \mathbf{w}_{\tau, 1:O}^m | \mathbf{z}_{\tau}^m, \mathcal{C}_{\tau}^{1:M}; \omega) \frac{q_{\phi}( \mathbf{w}^m_{\tau, 1:O} | \mathbf{z}_{\tau}^m, \mathcal{T}_{\tau}^{1:M}; \omega)}{ q_{\phi}( \mathbf{w}^m_{\tau, 1:O} | \mathbf{z}_{\tau}^m, \mathcal{T}_{\tau}^{1:M}; \omega)} d\mathbf{w}^m_{\tau, 1:O} \Big\} \\
& ~~~~~~~~~~~~~~~~ p_{\theta}(\mathbf{z}_{\tau}^m|\mathcal{C}_{\tau}^{m}; \nu^m) \frac{q_{\theta}(\mathbf{z}_{\tau}^m|\mathcal{T}_{\tau}^{m}; \nu^m)}{q_{\theta}(\mathbf{z}_{\tau}^m|\mathcal{T}_{\tau}^{m}; \nu^m)} d\textbf{z}_{\tau}^m \bigg \} \\
& \ge  \sum_{m=1}^{M} \bigg \{  \mathbb{E}_{q_{\theta}(\mathbf{z}_{\tau}^m|\mathcal{T}_{\tau}^{m}; \nu^m)} \Big\{ \log \int p(\mathbf{y}_{\tau}^{m}|\mathbf{x}_{\tau}^{m}, \mathbf{w}_{\tau, 1:O}^m) p_{\phi}( \mathbf{w}_{\tau, 1:O}^m | \mathbf{z}_{\tau}^m, \mathcal{C}_{\tau}^{1:M}; \omega) \frac{q_{\phi}( \mathbf{w}^m_{\tau, 1:O} | \mathbf{z}_{\tau}^m, \mathcal{T}_{\tau}^{1:M}; \omega)}{ q_{\phi}( \mathbf{w}^m_{\tau, 1:O} | \mathbf{z}_{\tau}^m, \mathcal{T}_{\tau}^{1:M}; \omega)} d\mathbf{w}^m_{\tau, 1:O} \Big\} \\
& ~~~~~~~~~~~~~~~~ - \mathbb{D}_{\rm{KL}} [q_{\theta}(\mathbf{z}_{\tau}^m|\mathcal{T}_{\tau}^{m}; \nu^m) || p_{\theta}(\mathbf{z}_{\tau}^m|\mathcal{C}_{\tau}^{m}; \nu^m)] \bigg \} \\
& \ge \sum_{m=1}^{M} \bigg \{ \mathbb{E}_{q_{\theta}(\mathbf{z}_{\tau}^m|\mathcal{T}_{\tau}^{m}; \nu^m)}  \Big \{ \mathbb{E}_{q_{\phi}(\mathbf{w}^m_{\tau, 1:O} | \mathbf{z}_{\tau}^m, \mathcal{T}_{\tau}^{1:M}; \omega)}[\log p(\mathbf{y}_{\tau}^{m}|\mathbf{x}_{\tau}^{m}, \mathbf{w}^{m}_{\tau, 1:O})] \\
& ~~~~~~~~~~~~~~~~ - \mathbb{D}_{\rm{KL}}[q_{\phi}(\mathbf{w}^m_{\tau, 1:O} | \mathbf{z}_{\tau}^m, \mathcal{T}_{\tau}^{1:M}; \omega) || p_{\phi}(\mathbf{w}^m_{\tau, 1:O} | \mathbf{z}_{\tau}^m, \mathcal{C}_{\tau}^{1:M}; \omega)] \Big \} \\
& ~~~~~~~~~~~~~~~~ - \mathbb{D}_{\rm{KL}}[q_{\theta}(\mathbf{z}_{\tau}^m|\mathcal{T}_{\tau}^{m}; \nu^m)||p_{\theta}(\mathbf{z}_{\tau}^m|\mathcal{C}_{\tau}^{m}; \nu^m)]  \bigg \} = L_{\rm{HNPs}}(\omega, \nu^{1:M}, \theta, \phi).
\end{aligned}
\label{eq: derivation_eblo}
\end{equation}
\end{small}

In general, when the variational joint posterior is flexible enough, the posterior approximation gap between the variational posterior and the intractable true posterior can be reduced to an arbitrarily small quantity~\cite{wang2023bridge}. In this case, maximizing the approximate ELBO increases the overall log likelihood in the proposed model accordingly. We construct task-specific decoders in a data-driven way by inferring local latent parameters $\mathbf{w}^m_{1:O}$ from all context sets and the meta-knowledge. This enables our model to amortize the training cost of each task-specific decoder, further reducing the model's over-fitting behaviors for episodic multi-task learning.

\subsection{Meta-Training Objective}
\vspace{-2mm}
In practice, we consider the loss function as the negative approximate ELBO of HNPs as given in Eq.~(\ref{eq: derivation_eblo}). By adopting Monte Carlo sampling~\cite{kingma2013auto, kingma2015variational}, the meta-training objective for each episode is:  
\begin{small}
\begin{equation}
\begin{aligned}
& - L_{\rm{HNPs}}(\omega, \nu^{1:M}, \theta, \phi) \approx \sum_{m=1}^{M} \bigg \{ \frac{1}{N_z} \sum_{i=1}^{N_z}  \Big \{ \frac{1}{N_w} \sum_{j=1}^{N_w} [ - \log p(\mathbf{y}_{\tau}^{m}|\mathbf{x}_{\tau}^{m}, {\mathbf{w}^{m}_{\tau, 1:O}}^{(j)})] \\
& + \mathbb{D}_{\rm{KL}}[q_{\phi}(\mathbf{w}^m_{\tau, 1:O} | {\mathbf{z}_{\tau}^m}^{(i)}, \mathcal{T}_{\tau}^{1:M}; \omega) || p_{\phi}(\mathbf{w}^m_{\tau, 1:O} | {\mathbf{z}_{\tau}^m}^{(i)}, \mathcal{C}_{\tau}^{1:M}; \omega)] \Big \} \\
& + \mathbb{D}_{\rm{KL}}[q_{\theta}(\mathbf{z}_{\tau}^m|\mathcal{T}_{\tau}^{m}; \nu^m)||p_{\theta}(\mathbf{z}_{\tau}^m|\mathcal{C}_{\tau}^{m}; \nu^m)]  \bigg \},\\
\end{aligned}
\label{eq:empirical objective}
\end{equation}
\end{small}where ${\mathbf{z}_{\tau}^m}^{(i)}$ and ${\mathbf{w}^{m}_{\tau, 1:O}}^{(j)}$ are sampled from their corresponding variational posteriors.
${N_z}$ and $N_w$ are the number of Monte Carlo samples for ${\mathbf{z}_{\tau}^m}$ and ${\mathbf{w}^{m}_{\tau, 1:O}}$, respectively.

\subsection{Meta-Test Prediction}
\vspace{-2mm}
At the meta-test stage, we perform predictions with the learned model on the target sets for a new episode ${\tau^*}$, which involves the prior distributions of global and local latent variables. We again approximate the predictive distribution with Monte Carlo estimates:
\begin{small}
\begin{equation}
\begin{aligned}
& p  (\mathcal{T}_{\tau^*}^{1:M} | \mathcal{C}_{\tau^*}^{1:M}; \omega, \nu^{1:M}) \approx \prod_{m=1}^{M} \bigg \{ \frac{1}{N_z} \sum_{i=1}^{N_z} \frac{1}{N_w}\sum_{j=1}^{N_w} p(\mathbf{y}_{\tau^*}^{m}|\mathbf{x}_{\tau^*}^{m}, {\mathbf{w}_{\tau^*, 1:O}^{m}}^{(j)}) \bigg \},
\\
\end{aligned}
\label{eq:predictive distribution}
\end{equation}
\end{small}where ${\mathbf{w}_{\tau^*, 1:O}^{m}}^{(j)} \sim p_{\phi}(\mathbf{w}^m_{\tau^*, 1:O} | {\mathbf{z}_{\tau^*}^m}^{(i)}, \mathcal{C}_{\tau^*}^{1:M}; \omega)$ and ${\mathbf{z}_{\tau^*}^m}^{(i)} \sim p_{\theta}(\mathbf{z}_{\tau^*}^m|\mathcal{C}_{\tau^*}^{m}; \nu^m)$. Here the Monte Carlo samples follow their corresponding prior distributions since the target sets are unavailable during the meta-test. 

\section{More Experimental Details}
\label{appendix:implementation_details}

\subsection{Transformer-structured Inference Modules in Regression Scenarios}
Here we present the implementation details of transformer-structured inference module $\theta$ in regression scenarios. 
In a single episode $\tau$, the module $\theta$ encodes the task-specific information into each refined task-specific token $\nu^m$, and then infers the prior distribution or the variational posterior distribution for the global latent representation $\mathbf{z}^m_{\tau}$. For episodic multi-task regression, the local latent parameters $\mathbf{w}_{\tau, 1:O}^{m}$ construct a task-specific regressor during inference. We assume that the output of the decoder follows a Gaussian distribution for regression tasks. Thus, $\mathbf{w}^m_{\tau, 1:O}$ are instantiated as parameters for generating the mean and variance of predictions.

\subsection{Backbone and Training Details}
Following the protocol of \cite{long2017learning}, we apply the pre-trained deep model as the backbone for the proposed method and baselines to extract the input features under the episodic multi-task classification setup. To be specific, we adopt VGGnet~\cite{simonyan2014very} for \texttt{Office-Home}, and its feature size is $4096$. We take ResNet18~\cite{he2016deep} for \texttt{DomainNet} with input size $512$. In practice, we train our method and baselines by the Adam optimizer~\cite{kingma2014adam} using an NVIDIA Tesla V100 GPU. The learning rate is initially set as $1e-4$ and decreases with a factor of $0.5$ every $3,000$ iterations.

%
%
%
%
%
%
%
%
%
%
%
%
%
%
%
%
%
%
%
%
%
%
%
%
%
%
%
%
%
%
%
%
%
%
%
%
%
%
%
%
%

\section{Algorithm of the proposed HNPs}

\begin{algorithm}[ht]
\SetAlgoLined
\SetKwInOut{Input}{Input}
\SetKwInOut{Output}{Output}
\Input{$M$ distinct and relevant task distributions $p(\mathcal{I}^{1:M})$, 
numbers of Monte Carlo samples $N_z$ and $N_w$, learning rates $\alpha$.}
\Output{Meta-trained transformer inference module $\theta$ and $\phi$, learnable tokens $\omega_{1:O}$ and $\nu^{1:M}$.}

Initialize transformer inference module $\{\theta, \nu^{1:M}\}$ \;

Initialize transformer inference module $\{\phi, \omega_{1:O}\}$ \;


\While{\text{Meta-Training not Completed}}{
\texttt{// sample a meta-training episode indexed with $\tau$.}

\For{$m=1$ to $M$}{
Sample a task $\mathcal{I}^m_{\tau} \sim p(\mathcal{I}^m)$, which shares the same target space $\mathcal{Y}_{\tau}$ with other tasks in the episode\;

Sample a context set $\mathcal{C}^m_{\tau}$ and a target set $\mathcal{T}^m_{\tau}$ for the task $\mathcal{I}^m_{\tau}$\;
}
\texttt{// infer hierarchical latent variables.}

\For{$m=1$ to $M$}{
Apply transformer inference module $\theta$ to infer prior and variational posterior distributions over $\mathbf{z}^m_{\tau}$ as Eq.~(5), Eq.~(6), and Eq.~(7) in the main paper\;

Draw $N_z$ samples from the variational posterior, $\{{\mathbf{z}^m_{\tau}}^{(i)}\}_{i=1}^{N_z}$\;

\For{$i=1$ to $N_z$}{
\For{$o=1$ to $O$}{
Apply transformer inference module $\phi$ to infer prior and variational posterior distributions over $\mathbf{w}^m_{\tau, o}$ as Eq.~(8), Eq.~(9), and Eq.~(10) in the main paper\;

Draw $N_w$ samples from the variational posterior, $\{{\mathbf{w}^m_{\tau, o}}^{(j)}\}_{j=1}^{N_w}$\;
}
}
}

\texttt{// optimize the objective.}

Compute predictive distributions and minimize the empirical objective in Eq.~(\ref{eq:empirical objective}) \;

Update $\theta$, $\phi$, $\omega_{1:O}$ and $\nu^{1:M}$ with learning rate $\alpha$.
}
\caption{Meta-training phase of HNPs.}
\label{algorithm}
\end{algorithm}

\begin{algorithm}[ht]
\SetAlgoLined
\SetKwInOut{Input}{Input}
\SetKwInOut{Output}{Output}
\Input{Meta-trained transformer inference module $\theta$ and $\phi$, learned tokens $\omega_{1:O}$ and $\nu^{1:M}$, numbers of Monte Carlo samples $N_z$ and $N_w$.}
\Output{Prediction results.}

\While{\text{Meta-Test not Completed}}{
\texttt{// given a meta-test episode indexed with $\tau^*$.}

Collect the context sets of all tasks in the episode, $\mathcal{C}^{1:M}_{\tau^{*}}$. 

\For{$m=1$ to $M$}{
Apply transformer inference module $\theta$ to infer prior distributions over $\mathbf{z}^m_{\tau^{*}}$ as Eq.~(5), Eq.~(6), and Eq.~(7) in the main paper\;

Draw $N_z$ samples from the prior distribution, $\{{\mathbf{z}^m_{\tau^{*}}}^{(i)}\}_{i=1}^{N_z}$\;

\For{$i=1$ to $N_z$}{
\For{$o=1$ to $O$}{
Apply transformer inference module $\phi$ to infer the prior distribution over $\mathbf{w}^m_{\tau^{*}, o}$ as Eq.~(8), Eq.~(9), and Eq.~(10) in the main paper\;

Draw $N_w$ samples from the prior distribution, $\{{\mathbf{w}^m_{\tau^{*}, o}}^{(j)}\}_{j=1}^{N_w}$\;
}
}
}

Perform predictions on the target sets in Eq.~(\ref{eq:predictive distribution}) \;

}
\caption{Meta-test phase of HNPs.}
\label{algorithm: test}
\end{algorithm}

\begin{table}[ht]
\centering
\caption{\textbf{Performance comparisons on Office-Home under the Xtask5way1shot setup during inference.}}
\begin{tabular}{lcccc}
\hline
Number of tasks & 1 & 2 & 3 & 4 \\
\hline
Average accuracy & 64.33 $\pm$ 0.85 & 70.75 $\pm$ 0.67 & 74.95 $\pm$ 0.57 & 76.29 $\pm$ 0.51 \\
\hline
\end{tabular}
\label{tab: rebuttal7}
\end{table}

\begin{table}[ht]
\centering
\caption{\textbf{Performance comparisons under the 4task20way1shot and 20way4shot setups.}}
\begin{tabular}{ccc}
\toprule
Methods & 4task20way1shot & 20way4shot \\
\midrule
MAML & $34.29 \pm 0.19$ & $37.23 \pm 0.25$ \\
Proto.Net & $32.72 \pm 0.18$ & $37.12 \pm 0.22$ \\
Ours & $51.82 \pm 0.23$ & - \\
\bottomrule
\end{tabular}
\label{tab: rebuttal3}
\end{table}

\begin{table}[ht]
\centering
\caption{\textbf{Performance comparisons with more recent baselines.}}
\begin{tabular}{lcc}
\toprule
Methods & 1-shot & 5-shot \\
\midrule
MTL-bridge~\cite{wang2021bridging} & $64.31 \pm 0.55$ & $75.10 \pm 0.51$ \\
MLTI~\cite{yao2021meta} & $70.69 \pm 0.73$ & $79.59 \pm 0.58$ \\
Ours & $76.29 \pm 0.51$ & $80.80 \pm 0.42$ \\
\bottomrule
\end{tabular}
\label{tab: rebuttal4}
\end{table}

\begin{table}[ht]
\centering
\caption{\textbf{Performance comparisons on the Office31 dataset.}}
\begin{adjustbox}{max width =\textwidth}
\begin{tabular}{lcccccc}
\hline
Methods & ERM & Proto.Net & CNPs & NPs & TNP-D & Ours \\
\toprule
Average Accuracy & $63.53 \pm 0.71$ & $64.54 \pm 0.64$ & $49.02 \pm 0.74$ & $40.52 \pm 0.75$ & $69.69 \pm 0.87$ & $71.89 \pm 0.52$ \\
\bottomrule
\end{tabular}
\end{adjustbox}
\label{tab: rebuttal5}
\end{table}

\section{More Experimental Results under Episodic Multi-Task Setup}

\subsection{Effects of More “Tasks”}
To investigate the effects of more "tasks" in the episodic multi-task setup, we perform experiments on Office-Home under the Xway5way1shot setup by gradually increasing tasks during inference.
Table~\ref{tab: rebuttal7} shows that the average accuracy increases with more tasks. The main reason is that more tasks can provide richer transferable information. Our model benefits from the positive transfer among tasks, and thus obtaining higher performance gain from more tasks.

\subsection{4task20way1shot v.s. 20way4shot}
To show the effectiveness of the proposed method, we conduct experiments with the 20-way 4-shot setup, which needs to mix samples from all tasks in one episode. As shown in Table~\ref{tab: rebuttal3},  MAML and Proto.Net perform better under the 20-way 4-shot but cannot outperform our method. The main reason is that our method can better handle distribution shifts among tasks by exploring task-relatedness rather than simply mixing them together.

\subsection{Comparisons with More Recent Works}
To compare the proposed method with more recent works, we perform experiments on Office-Home under the 4task5way1shot and 4task5way5shot setups.
We provide some brief descriptions of two recent works as follows:

~\cite{wang2021bridging} theoretically addresses the conclusion that MTL methods are powerful and efficient alternatives to GBML for meta-learning applications. However, our method inherits the advantages of multi-task learning and meta-learning: simultaneously generalizing meta-knowledge from past to new episodes and exploiting task-relatedness across heterogeneous tasks in every single episode. Thus, our method is more suitable for solving the data-insufficiency problem.

~\cite{yao2021meta} augments the task set in meta-learning through interpolation. Our method fully utilizes several observed tasks in a single episode rather than generating additional tasks.

Table~\ref{tab: rebuttal4} shows that our method significantly outperforms other baselines with severely insufficient data, such as 1-shot. This is consistent with the conclusion obtained in the main paper.

\subsection{Comparisons on Another Benchmark Dataset}
We validate the performance of methods on the Office31 dataset~\cite{saenko2010adapting, kulis2011you} under the 3task5way1shot setup. This dataset contains 31 object categories in three domains: Amazon, DSLR, and Webcam. Table~\ref{tab: rebuttal5} shows our method outperforms baseline methods, demonstrating our model's effectiveness in addressing the data insufficiency under the episodic setup. 

\section{More Experimental Results under Conventional Multi-Task Setup}
\label{appendix:more_results_MIMO}
To show comparisons with existing multi-task models, which are designed for conventional multi-task learning, we extend the proposed HNPs to conventional multi-task learning settings for both regression and classification tasks by considering only one episode during training and inference. 
Under conventional multi-task settings (MIMO), we investigate their effectiveness in exploring shared knowledge when limited tasks and samples are available during training.

\subsection{Conventional Multi-Task Regression}

\paragraph{Dataset and Settings.}
We show the effectiveness of HNPs for conventional multi-task regression. We design experiments for rotation angle estimation on the $\texttt{Rotated MNIST}$ dataset~\cite{lecun1998gradient}. Each task is an angle estimation problem for a digit, and different tasks corresponding to individual digits are related because they share the same rotation angle space. Each image is rotated by $0^\circ$ through $90^\circ$ in intervals of $10^\circ$, where the rotation angle is the regression target. We randomly choose $0.1\%$ training samples per task per angle as the training set. 

We use the average of normalized mean squared errors (NMSE) of all tasks as the performance measurement. The lower NMSE, the better the performance. We provide $95$\% confidence intervals for the errors from five runs. Descriptions of baselines can be found in Section~\ref{appendix: subsection_MIMO_class}.

\paragraph{Results and Discussions.}
The experimental results are summarized in Table~\ref{table_mnist}. 
The proposed HNPs outperform other counterpart methods by yielding a lower mean error. This demonstrates the effectiveness of HNPs in capturing task-relatedness for improved regression performance.

\begin{table}[ht]
\begin{center}
\caption{\textbf{Conventional multi-task regression (normalized mean squared errors) for rotation angle estimation.} 
}
\label{table_mnist}
\centering
\begin{adjustbox}{max width =\textwidth}
\begin{tabular}{l|c>{\columncolor[gray]{.9}}c}
\toprule
Methods 
 &  Average NMSE\\
\midrule
STL
& .215 {\scriptsize$\pm$.001} \\
VSTL
& .224 {\scriptsize$\pm$.004} \\
\midrule
BMTL
& .118 {\scriptsize$\pm$.003}  \\
VBMTL
& .121 {\scriptsize$\pm$.003} 
\\
LearnToBranch~\cite{guo2020learning}
& {.109} {\scriptsize$\pm$.002}
\\
VMTL~\cite{shen2021variational}
& .110 {\scriptsize$\pm$.003}
\\
\rowcolor{LightGray}
HNPs
& \textbf{.103} {\scriptsize$\pm$.001} \\
\bottomrule
\end{tabular}
\end{adjustbox}
\end{center}
\end{table}

\subsection{Conventional Multi-Task Classification}
\label{appendix: subsection_MIMO_class}

\paragraph{Datasets and Settings.}
\texttt{Office-Caltech} \cite{gong2012geodesic} contains ten categories shared between Office-31 \cite{saenko2010adapting} and Caltech-256 \cite{griffin2007caltech}. One task uses data from Caltech-256, and the other tasks use data from Office-31, whose images were collected from three distinct domains/tasks, namely Amazon, Webcam and DSLR. There are $8 \sim 151$ samples per category per task and $2,533$ images. \texttt{ImageCLEF} \cite{long2017learning}, the benchmark for the ImageCLEF domain adaptation challenge, contains $12$ common categories shared by four public datasets/tasks: Caltech-256, ImageNet ILSVRC 2012, Pascal VOC 2012, and Bing. There are $2,400$ images in total. \texttt{Office-Home} \cite{venkateswara2017Deep} mentioned in the main paper is also used under this setting. 

We adopt the standard evaluation protocols~\cite{long2017learning} for multi-task classification datasets. We randomly select $5\%$, $10\%$ and $20\%$ labeled data for training, which correspond to about 3, 6 and 12 samples per category per task, respectively. In this case, each task has insufficient training data for building a reliable classifier without overfitting. The average accuracy of all tasks is used for measuring the overall performance. We again provide $95$\% confidence intervals for the errors from five runs. 

\paragraph{Alternatives Methods.}
We conduct a thorough comparison with alternative multi-task learning models. Single-task learning (STL) is implemented by task-specific feature extractors and predictors without knowledge sharing among tasks. Basic multi-task learning (BMTL) shares feature extractors and adds task-specific predictors. We also define variational extensions of single-task learning (VSTL) and basic multi-task learning (VBMTL), which treat predictors as latent variables~\cite{shen2021variational}. For a fair comparison, all the baseline methods mentioned above share the same architecture of the feature extractor and the train-test splits. 
We also compare the proposed HNPs to representative multi-task models.
MTL-Uncertainty~\cite{kendall2018multi}, MRN~\cite{long2017learning},  LearnToBranch~\cite{guo2020learning} are deep MTL methods, employing
deep neural networks to construct information-sharing mechanisms for tasks.
TCGBML~\cite{bakker2003task}, MTVIB~\cite{qian2020multi} and VMTL~\cite{shen2021variational} are probabilistic MTL methods, applying Bayes frameworks to model the relationships among tasks.

\begin{table*}
\caption{\textbf{Classification performance (average accuracy) on \texttt{Office-Home}, \texttt{Office-Caltech} and \texttt{ImageCLEF}.}}
\centering
\begin{adjustbox}{max width =\linewidth}
\begin{tabular}{l|ccc|ccc|ccc}
\toprule
& \multicolumn{3}{c|}{\texttt{Office-Home}}         & \multicolumn{3}{c|}{\texttt{Office-Caltech}}   & \multicolumn{3}{c}{\texttt{ImageCLEF}}  \\
\cline{2-10}
Methods 
&\texttt{5\%} &\texttt{10\%} &\texttt{20\%}  
&\texttt{5\%} &\texttt{10\%} &\texttt{20\%}  
&\texttt{5\%} &\texttt{10\%} &\texttt{20\%} \\
\midrule
STL
&49.2 {\scriptsize$\pm$0.2}
&58.3 {\scriptsize$\pm$0.1}
&64.9 {\scriptsize$\pm$0.1}
&88.6 {\scriptsize$\pm$0.3}
&90.7 {\scriptsize$\pm$0.2}
&92.4 {\scriptsize$\pm$0.3}
&62.6 {\scriptsize$\pm$0.2} 
&69.7 {\scriptsize$\pm$0.3} 
&76.2 {\scriptsize$\pm$0.3}
\\
VSTL
&51.1 {\scriptsize$\pm$0.1}
&60.2 {\scriptsize$\pm$0.2} 
&65.8 {\scriptsize$\pm$0.2}
&89.0 {\scriptsize$\pm$0.2}
&91.1 {\scriptsize$\pm$0.2}
&93.4 {\scriptsize$\pm$0.3}
&64.9 {\scriptsize$\pm$0.3}
&70.8 {\scriptsize$\pm$0.3}
&77.2 {\scriptsize$\pm$0.2}
\\
\midrule
BMTL
&50.4 {\scriptsize$\pm$0.1}
&59.5 {\scriptsize$\pm$0.1}
&65.6 {\scriptsize$\pm$0.1}
&89.5 {\scriptsize$\pm$0.3}
&92.3 {\scriptsize$\pm$0.2}
&93.1 {\scriptsize$\pm$0.1}
&65.7 {\scriptsize$\pm$0.4}
&72.0 {\scriptsize$\pm$0.3}
&76.8 {\scriptsize$\pm$0.3}
\\
VBMTL
&51.3 {\scriptsize$\pm$0.1}
&60.9 {\scriptsize$\pm$0.1}
&67.0 {\scriptsize$\pm$0.2}
&90.8 {\scriptsize$\pm$0.6}
&93.2 {\scriptsize$\pm$0.2}
&93.5 {\scriptsize$\pm$0.1}
&67.1 {\scriptsize$\pm$0.3}
&73.0 {\scriptsize$\pm$0.7}
&78.0 {\scriptsize$\pm$0.2}
\\
MTL-Uncertainty\cite{kendall2018multi} 
&51.8 {\scriptsize$\pm$0.1}
&57.2 {\scriptsize$\pm$0.2}
&66.8 {\scriptsize$\pm$0.2}
&91.2 {\scriptsize$\pm$0.3}
&93.8 {\scriptsize$\pm$0.2}
&94.7 {\scriptsize$\pm$0.3}
&74.6 {\scriptsize$\pm$0.2}
&76.9 {\scriptsize$\pm$0.3}
&79.2 {\scriptsize$\pm$0.3}
\\
MRN~\cite{long2017learning} 
&{57.4 {\scriptsize$\pm$0.1}}
&{63.4} {\scriptsize$\pm$0.2}
&67.1 {\scriptsize$\pm$0.1}
&93.4 {\scriptsize$\pm$0.2} 
&94.8 {\scriptsize$\pm$0.3}  
&95.1 {\scriptsize$\pm$0.1}
&73.7 {\scriptsize$\pm$0.4} 
&75.8 {\scriptsize$\pm$0.2} 
&79.7 {\scriptsize$\pm$0.3}
\\
LearnToBranch~\cite{guo2020learning} 
&38.3 {\scriptsize$\pm$0.5} 
&51.5 {\scriptsize$\pm$0.3}
&62.2 {\scriptsize$\pm$0.4}
&74.6 {\scriptsize$\pm$0.9}   
&80.4 {\scriptsize$\pm$1.2}  
&89.9 {\scriptsize$\pm$0.8}
&51.7 {\scriptsize$\pm$0.9} 
&62.6 {\scriptsize$\pm$0.8} 
&71.6 {\scriptsize$\pm$0.4} 
\\
TCGBML\cite{bakker2003task} 
&52.8 {\scriptsize$\pm$0.1}
&60.0 {\scriptsize$\pm$0.2}
&68.7 {\scriptsize$\pm$0.2}
&91.8 {\scriptsize$\pm$0.1}
&95.0 {\scriptsize$\pm$0.2}
&95.1 {\scriptsize$\pm$0.1}
&73.9 {\scriptsize$\pm$0.3}
&76.5 {\scriptsize$\pm$0.4}
&79.3 {\scriptsize$\pm$0.4}
\\
MTVIB~\cite{qian2020multi}
&49.9 {\scriptsize$\pm$0.2}
&55.3 {\scriptsize$\pm$0.1}
&66.2 {\scriptsize$\pm$0.1}
&91.1 {\scriptsize$\pm$0.3}
&94.1 {\scriptsize$\pm$0.3}
&95.0 {\scriptsize$\pm$0.2}
&74.0 {\scriptsize$\pm$0.4}
&77.3 {\scriptsize$\pm$0.3}
&78.9 {\scriptsize$\pm$0.5}
\\
VMTL~\cite{shen2021variational}
&58.3 {\scriptsize$\pm$0.1}
&65.0 {\scriptsize$\pm$0.0}
&69.2 {\scriptsize$\pm$0.2}
&93.8 {\scriptsize$\pm$0.1}
&{95.3} {\scriptsize$\pm$0.0}
&95.2 {\scriptsize$\pm$0.1}
&{76.2} {\scriptsize$\pm$0.3}
&77.9 {\scriptsize$\pm$0.2}
&80.2 {\scriptsize$\pm$0.1}
\\
\rowcolor{LightGray}
HNPs
&\textbf{60.0} {\scriptsize$\pm$0.1}
&\textbf{66.2} {\scriptsize$\pm$0.2}
&\textbf{70.9} {\scriptsize$\pm$0.2}
&\textbf{94.6} {\scriptsize$\pm$0.1}
&\textbf{95.4} {\scriptsize$\pm$0.1}
&\textbf{95.8} {\scriptsize$\pm$0.1}
&\textbf{76.4} {\scriptsize$\pm$0.1}
&\textbf{79.5} {\scriptsize$\pm$0.1}
&\textbf{80.9} {\scriptsize$\pm$0.1}
\\
\bottomrule
\end{tabular}
\end{adjustbox}
\label{tab:regular_MIMO}
\end{table*}

\paragraph{Results and Discussions.} We provide more comprehensive comparisons on $\texttt{Office-Home}$, \texttt{Office-Caltech} and \texttt{ImageCLEF} in Table~\ref{tab:regular_MIMO}. The best results are marked in bold. Our HNPs achieve competitive and even better performance on conventional multi-task classification datasets with different train-test splits. VSTL and VBMTL perform better than STL and BMTL, demonstrating the benefits of Bayes frameworks. Compared with multi-task probabilistic baselines, including VBMTL, TCGBML, MTVIB and VMTL, our HNPs can model more complex functional distributions with more powerful priors by inferring both representations and parameters for predictive functions. 

Compared with VMTL~\cite{shen2021variational}, which neglects the hierarchical architecture of latent variables, the proposed HNPs show better performance. This demonstrates that by modeling the complex dependencies between heterogeneous context sets within the hierarchical Bayes framework, HNPs explore task-relatedness better. The hierarchical Bayes framework enables our model to explore the relevant knowledge even in the presence of distribution shifts among tasks. 

\section{Application to Brain Image Segmentation}
\label{appendix:more_results_Brain}
 
To show that HNPs have the potential to be helpful in settings other than categorization and regression, we apply the proposed HNPs to brain image segmentation.

\paragraph{Dataset and Settings.}

We adopt a brain image dataset~\cite{buda2019association} with lower-grade gliomas collected from $110$ patients. The number of images varies among patients from $20$ to $88$. The goal is to segment the tumor in each brain image by predicting its contour.

\begin{figure}[ht]
\centering
\centerline{\includegraphics[width=0.8\linewidth]{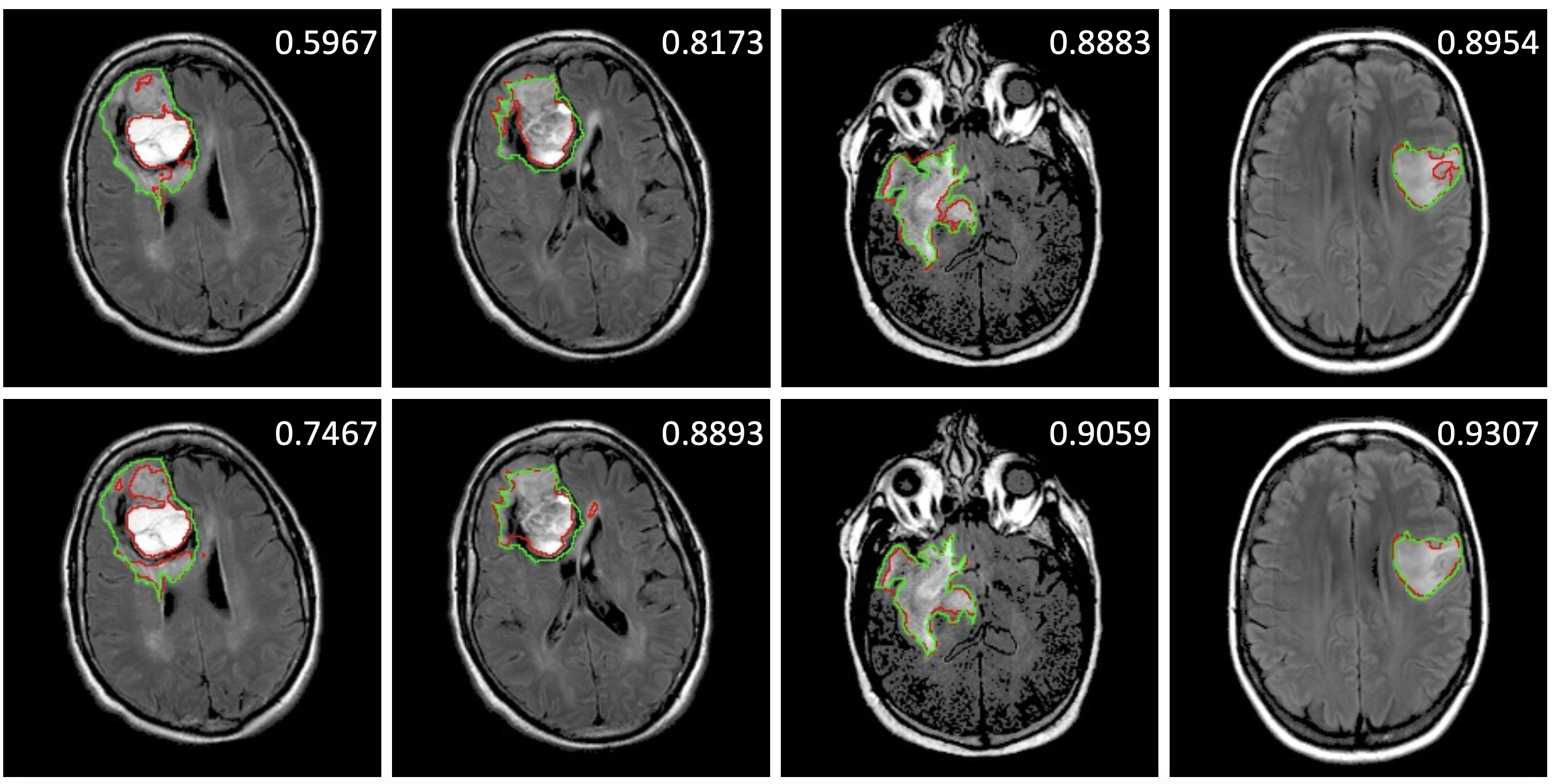}}
\caption{\textbf{Segmentation results by HNPs (bottom row) and U-Net (upper row)}. The ground truth contours are in green, and the predicted ones are in red. The numbers are DSC scores computed against the ground truth. HNPs can predict contours closer to the ground truth ones, indicating the advantages of exploring spatial context information for image segmentation.
}
\label{draw_brain}
\vspace{-2mm}
\end{figure}

We reformulate the segmentation task as a pixel-wise regression problem, where each pixel corresponds to a regression task to predict the probability of this pixel belonging to the tumor. In doing so, the spatial correlation and dependency among pixels are effectively modeled by capturing task-relatedness.
For the task $m$, we define $\Omega_m$ as a local region centered at the spatial position $m$, which provides the local context information.  In this case, the region centered at the pixel provides the local context information.
Each task incorporates the knowledge provided by related tasks into the context of the predictive function. This offers an effective way to model the long-range interdependence of pixels in one image. 
In the implementation, we use U-Net~\cite{ronneberger2015u} as the backbone and append our model to the last layer. 

\vspace{-2mm}
\paragraph{Results and Discussions.}
The proposed HNPs surpass the baseline U-Net by 0.8\%  (91.2\% v.s. 90.6\%) in terms of dice similarity coefficients (DSC) for the overall validation set. We provide the predicted contours by HNPs (bottom row) and the U-Net (upper row) in Figure~\ref{draw_brain}, where the green outline corresponds to the ground truth and the red to the segmentation output. This figure shows that HNPs predict contours closer to the ground truth. The results demonstrate the advantages of HNPs in exploring spatial-relatedness for medical segmentation.

\end{document}